\def\eqref#1{equation~\ref{#1}}
\def\1{\bm{1}}
\DeclareMathAlphabet{\mathsfit}{\encodingdefault}{\sfdefault}{m}{sl}
\SetMathAlphabet{\mathsfit}{bold}{\encodingdefault}{\sfdefault}{bx}{n}
\newtheorem{assum}{Assumption}
\newtheorem{theorem}{Theorem}
\newtheorem{lemma}{Lemma}
\newtheorem{corr}{Corollary}
\newtheorem{definition}{Definition}
\title{Label Propagation with Weak Supervision}
\author{Rattana Pukdee$^*$, Dylan Sam\thanks{Equal contribution} , Maria-Florina Balcan, Pradeep Ravikumar\\
Machine Learning Department\\
Carnegie Mellon University\\
Pittsburgh, USA \\
\texttt{\{rpukdee , dylansam\}@cs.cmu.edu}
}
\begin{document}

\maketitle

\begin{abstract}
    Semi-supervised learning and weakly supervised learning are important paradigms that aim to reduce the growing demand for labeled data in current machine learning applications. In this paper, we introduce a novel analysis of the classical label propagation algorithm (LPA) \citep{label_prop} that moreover takes advantage of useful prior information, specifically probabilistic hypothesized labels on the unlabeled data. We provide an error bound that exploits both the local geometric properties of the underlying graph and the quality of the prior information. We also propose a framework to incorporate \textit{multiple} sources of noisy information. In particular, we consider the setting of weak supervision, where our sources of information are weak labelers. We demonstrate the ability of our approach on multiple benchmark weakly supervised classification tasks, showing improvements upon existing semi-supervised and weakly supervised methods. 
\end{abstract}

\section{Introduction}

High-dimensional machine learning models require large labeled datasets for good performance and generalization.  In the paradigm of semi-supervised learning, we look to overcome the bottleneck of labeled data by leveraging large amounts of unlabeled data and assumptions on how the target predictor behaves over the unlabeled samples. In this work, we focus on the classical semi-supervised approach of label propagation (LPA)~\citep{label_prop, LLGC}. This method propagates labels from labeled to unlabeled samples, under the assumption that the target predictor is smooth with respect to a graph over the samples (that is frequently defined by a euclidean distance threshold or nearest neighbors). However, in practice, to satisfy this strong assumption, the graph can be highly disconnected. 
In these cases, LPA performs well locally on regions connected to labeled points, but has low overall coverage as it cannot propagate to points beyond these connected regions.

In practice, we also have additional side-information beyond such smoothness of the target predictor. One concrete example of side information comes from the field of weakly supervised learning (WSL) \citep{data_programming, snorkel}, which considers learning predictors from domain knowledge that takes the form of hand-engineered weak labelers. These weak labelers are heuristics that provide multiple weak labels per unlabeled sample, and the focus in WSL is to aggregate these weak labels to produce noisy pseudolabels for each unlabeled sample. In practice, weak labelers are typically not designed to be smooth with respect to a graph, even though the underlying target predictor might be. For example, weak labelers are commonly defined as hard, binary predictions, with an ability to abstain from predicting. We thus see that LPA and WSL have complementary sources of information, as smoothing via LPA can improve the quality of weak labelers. By encouraging smoothness, predictions near multiple abstentions can be made more uncertain, and abstentions can be converted into predictions by confident nearby predictions. 

In this paper, we first bolster the theoretical foundations of LPA in the presence of side information. While LPA has a strong theoretical motivation of leveraging smoothness of the target predictor, there is limited theory on how accurate the propagated labels actually are. As a key contribution of this paper, we provide a ``fine-grained'' theory of LPA when used with any \textit{general prior} on the target classes of the unlabeled samples. We provide a novel error bound for LPA, which depends on key local geometric properties of the graph, such as underlying smoothness of the target predictor over the graph, and the flow of edges from labeled points, as well as the accuracy of our prior. Our bound provides an intuition as to when LPA should prioritize propagating label information or when it should prioritize using prior information. We provide a comparison of our error bound to an existing spectral bound \citep{Belkin2004SemiSupervisedLO} and demonstrate that our bound is preferable in some examples. 

Next, we propose a framework for incorporating multiple sources of noisy information to LPA by extending a framework from \citet{GFHF}. We construct additional ``dongle" nodes in the graph that correspond to individual noisy labels. With these additional nodes, we connect them to unlabeled points that receive noisy predictions and perform label propagation on this new graph as usual. We study multiple different techniques for determining the weight on these additional edges.


Finally, we focus on the specific case when our side information comes from WSL. We provide experimental results on standard weakly supervised benchmark tasks \citep{wrench} to support our theoretical claims and to compare our methods to standard LPA, other semi-supervised methods, and existing weakly supervised baselines. Our experiments demonstrate that incorporating smoothness via LPA in the standard weakly supervised pipeline leads to better performance, outperforming many existing WSL algorithms. This supports that there are significant benefits to combining LPA and WSL, and we believe that this intersection is a fertile ground for future research. 

\subsection{Related Work}
\textbf{Label propagation}\: 
Many papers have studied LPA from a theoretical standpoint. LPA has various connections to random walks, spectral clustering \citep{GFHF}, manifold learning \citep{Belkin2004SemiSupervisedLO, belkin2006manifold} and network generative models \citep{yamaguchi2017does}, graph conductance \citep{talukdar2014scaling}. Another line of research in LPA proposes using prior information at the initialization of LPA \citep{yamaguchi2016camlp, zhou2018selp}, with applications in image segmentation \citep{vernaza2017learning}, distant supervision \citep{bing2015improving}, and domain adaptation \citep{cai2021theory,wei2020theoretical}.
Finally, as the graph has a large impact on the performance of LPA, another line of work studies how to optimize the construction of the graph with linear-based \citep{wang2007label} methods, manifold-based \citep{karasuyama2013manifold} methods, or deep learning based methods \citep{liu2018learning, liu2019deep}.

\noindent \textbf{Weakly supervised learning}\: The field of (programmatic) weakly supervised learning provides a framework for creating and combining hand-engineered weak labelers \citep{data_programming, snorkel, multi_task_weak_supervision, Fu2020FastAT} to pseudolabel unlabeled data and train a downstream model. Recent advances in weakly supervised learning extend the setting to include a small set of labeled data. One recent line of work has considered constraining the space of possible pseudolabels via weak labeler accuracies \citep{gALL, AMCL, pgmv, Arachie2021ConstrainedLF, Arachie2022DataCF}. Other works improve the aggregation scheme \citep{dpssl} or the weak labelers \citep{exemplars}. We note that only one method incorporates any notion of smoothness into the weakly supervised pipeline \citep{Chen2022ShoringUT}. This work
leverages the smoothness of pretrained embeddings in clustering. While clustering and LPA have similar intuitions, they result in fundamentally different notions of smoothness. We also remark that this paper does not consider the semi-supervised setting. 

\noindent  \textbf{Semi-supervised learning }\: Many other methods in semi-supervised learning look to induce smoothness in a learnt model. These include consistency regularization \citep{Bachman2014LearningWP, Sajjadi2016RegularizationWS, Laine2017TemporalEF, Sohn2020FixMatchSS} and co-training \citep{Blum1998CombiningLA, Balcan2004CoTrainingAE, Han2018CoteachingRT}. In addition, Graph Neural Networks (GNNs) \citep{GCN,hamilton2017inductive,gilmer2017neural,scarselli2008graph, gori2005new,Henaff2015DeepCN} is a class of deep learning based methods that also operate over graphs. 
Some recent works \citep{huang2020combining, Wang2020UnifyingGC, dong2021equivalence} have made connections between graph neural networks and LPA. While all these methods focus on a similar goal of learning a smooth function, they do not address the weakly supervised setting. 

\section{Preliminaries}

We consider a binary classification setting where we want to learn a classifier $f^*: \mathcal{X} \to \{0,1\}$. We observe a small set of labeled data $L = \{(x_i, y_i) \}_{i=1}^n$ and a much larger set of unlabeled data $U = \{x_j \}_{j=n+1}^{n+m}$. LPA relies on the assumption that nearby data points have similar labels. This is expressed in terms of smoothness with respect to an undirected graph $G = (V,E)$, with $|V|$ nodes representing each point $x \in L\cup U$, and with an adjacency matrix $W = (w)_{ij}$. LPA then leverages the assumption that adjacent points in this graph have similar labels, by propagating label information from $L$ to $U$. Specifically, it learns  $f: \mathcal{X} \to \mathbb{R}$ by solving the following optimization problem: 
\begin{align*}
    \min_{f\in \mathbb{R}^{n+m}} \frac{1}{2}(\sum_{i=1}^{n+m}\sum_{j=1}^{n+m}w_{ij}(f_i - f_j)^2) \text{ s.t. } f_i = y_i \text{ for } i \leq n
\end{align*}
where $f\in \mathbb{R}^{n+m}$ is the prediction vector and, abusing notation, $f_i = f(x_i)$. The method generalizes to the multi-class setting by replacing $y_i$ with a one-hot-encoding vector, and predicting a score vector at each node. \cite{GFHF} provides a quick iterative method to solve this optimization problem. 

\section{Label Propagation with Prior Information}
We analyze LPA with initial noisy predictions $h(x): \mathcal{X} \to [0,1]$, by solving the following objective:
\begin{align}
\label{lp objective}
\begin{split}
    \min_{f\in \mathbb{R}^{n+m}} \frac{1}{2}(\sum_{i=1}^{n+m}\sum_{j=1}^{n+m}w_{ij}(f_i - f_j)^2 + \mu \sum_{i=1}^{n+m}(f_i - h(x_i))^2 ) 
    \text{ s.t. } f_i = y_i \text{ for } i \leq n,
\end{split}
\end{align}
where 
$\mu \in \mathbb{R}$ determines how much the solution is regularized to be close to $h$. In the standard LPA, we have no prior information on the unlabeled points, which can be seen as the case when $h = 0.5$ and $\mu \to 0$. In our theory, $h$ can be any general prior.


\begin{figure*}[t]
     \centering
     \begin{subfigure}[b]{0.4\textwidth}
         \centering
         \includegraphics[width=\textwidth]{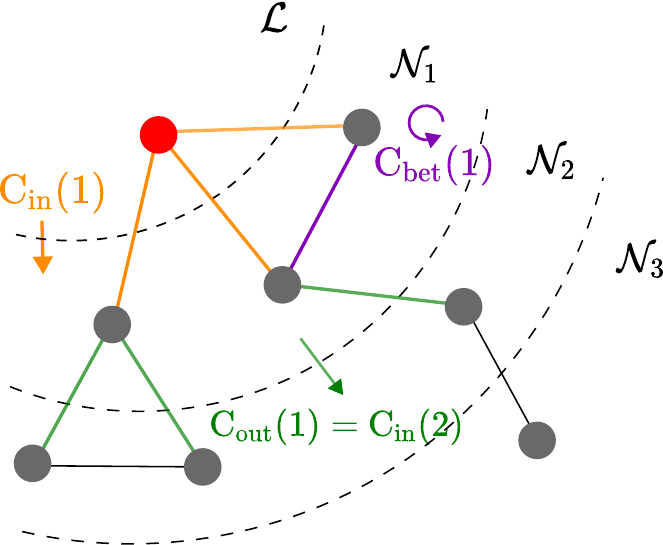}
         \caption{Diagram of edges flow}
         \label{fig:flow N_l}
     \end{subfigure}
     \hfill
     \begin{subfigure}[b]{0.25\textwidth}
         \centering
         \includegraphics[width=\textwidth]{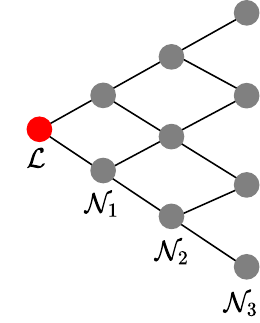}
         \caption{``In" $<$ ``Out"}
         \label{fig:in < out}
     \end{subfigure}
     \hfill
     \begin{subfigure}[b]{0.25\textwidth}
         \centering
         \includegraphics[width=\textwidth]{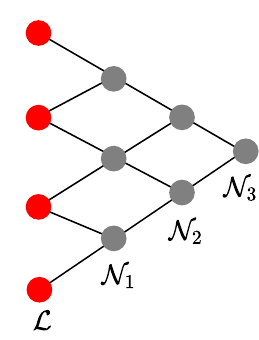}
         \caption{``In" $>$ ``Out"}
         \label{fig: in > out}
     \end{subfigure}
        \caption{A diagram of edges flow between neighborhoods of $L$. Color on each edge implies that the edge contributes to which flows (In, Between, Out) (left). Examples of graphs with different structure, where colored points represent labeled points (middle, right). }
        \label{fig: Graph example}
\end{figure*}

\subsection{Error Bound of LPA with Prior Information}
Similar to the standard LPA, there exists a closed form optimal solution of Equation \ref{lp objective}, which is discussed in Appendix \ref{solve_lp}. 
We know that, for the optimal solution of Equation \ref{lp objective}, we can bound the error of a point $i$ ($|f_i^* - y_i|$) by the error of its neighbors and terms corresponding to the smoothness of the true labels and the prior information accuracy; we formally state this in Appendix \ref{extended_theory} (Lemma \ref{inq error2}).
Because the error on labeled points are zero, we can bound the error in terms of the distance of a point to the nearest labeled point.
For a set of labeled data $L$, let $\mathcal{N}(L)$ be a set of reachable points where there is at least one path from a point in $L$. Define a set of neighbors $k$-hops away from $L$ as $\mathcal{N}_k(L)$ (i.e, a set of points whose shortest path to a point in $L$ is length $k$). Let $l$ be the number of hops required to cover $\mathcal{N}(L)$. Then, we have 
$$\mathcal{N}(L) = L \cup \bigcup_{k = 1}^l \mathcal{N}_k(L).$$

For simplicity, we denote $\mathcal{N}_k$ as $\mathcal{N}_k(L)$ and $\mathcal{N}_0$ as $L$. We now define terms that are fundamental to our error bound. First, we introduce notions of \textit{In-flow}, \textit{Between-flow}, and \textit{Out-flow}, which represent the fraction of edges that flow in, between, and out of $\mathcal{N}_k(L)$.  
\begin{definition}
For a graph $G$ with an adjacency matrix $W = (w)_{ij}$ and a set of $k$-hop neighbors $\mathcal{N}_k$, we define the In-flow, Between-flow and Out-flow of $\mathcal{N}_k$ as 
\begin{equation*}
    \begin{split}
        &C_{\text{in}}(k) = \sum_{i \in \mathcal{N}_k, j \in \mathcal{N}_{k-1}} w_{ij} , \:\quad C_{\text{bet}}(k) = \sum_{i \in \mathcal{N}_k, j \in \mathcal{N}_{k}} w_{ij} ,\\ &C_{\text{out}}(k) = \sum_{i \in \mathcal{N}_k, j \in \mathcal{N}_{k+1}} w_{ij} 
    \end{split}
\end{equation*}
\end{definition}

These terms are related to the notion of conductance, which measures the fraction of out-going edges from any subset of nodes. 
We can write the Dirichlet conductance \citep{haochen2021provable} of a neighborhood $\mathcal{N}_k$ as follows
\begin{equation*}
    \phi(\mathcal{N}_k) = \frac{C_{\text{in}}(k) + C_{\text{out}}(k)}{C_{\text{in}}(k) + C_{\text{bet}}(k) + C_{\text{out}}(k)}.
\end{equation*}

\begin{definition}
(Ratio between Out-flow and In-flow)
\begin{equation*}
    \gamma_k = \frac{C_{\text{out}}(k)}{C_{\text{in}}(k) + \mu|\mathcal{N}_k|}
\end{equation*}
\end{definition}

$\gamma_k$ is a proportion of the Out-flow and In-flow edges of a neighborhood (see Figure \ref{fig: Graph example} for graphs with different flow). Next, we define the smoothness of $\mathcal{N}_k$, prior information error, and average error. 
\begin{definition}
(Smoothness of neighborhood)
For $1 \leq k \leq l$, we define the smoothness of true labels of points in $\mathcal{N}_k$ with respect to the graph as
\begin{equation*}
    s_k = \sum_{i \in \mathcal{N}_k} \sum_{j}w_{ij}|y_j - y_i|.
\end{equation*}
\end{definition}

\begin{definition}
(Prior information error)
For $1 \leq k \leq l$, let the average error of the prior in $\mathcal{N}_k$ be
\begin{equation*}
    \alpha_k = \frac{\sum_{i \in \mathcal{N}_k} |h_i - y_i|}{|\mathcal{N}_k|}.
\end{equation*}
\end{definition}

\begin{definition}
\label{def: avg error}
(Average error) We define the average error at the $\mathcal{N}_k$ as
\begin{equation*}
    E_k = \frac{\sum_{i \in \mathcal{N}_k} |f^*_i - y_i|}{|\mathcal{N}_k|}.
\end{equation*}
\end{definition}


\begin{theorem}
\label{thm: bound short}
(Informal version of Theorem \ref{bound LPA}) Let $f^*$ be the optimal solution of the optimization problem of Equation \ref{lp objective}, under Assumption \ref{assum uniformity of error} which assume that average error of a fraction of points that has ``Out" connections from neighborhood $\mathcal{N}_k$ is of a constant factor of the average error in $\mathcal{N}_k$ (refer to Appendix \ref{extended_theory}), the error of $f^*$ in each neighborhood $\mathcal{N}_k$ is given by
\begin{equation*}
    E_k \leq O\left(\sum_{i=1}^k d_i \right), 
\end{equation*}
where 
\begin{align*}
    d_k = \sum_{i=k}^l c_i(\prod _{j=k}^{i-1} \gamma_j), \quad c_k = \frac{s_k + \mu|\mathcal{N}_k|\alpha_k}{C_{\text{in}}(k) + \mu|\mathcal{N}_k|}.
\end{align*}
\end{theorem}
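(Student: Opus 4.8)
The plan is to prove the neighborhood bound in three stages: a per-point optimality bound, an aggregation over each layer $\mathcal{N}_k$ that produces the constants $c_k$ and $\gamma_k$, and the resolution of the resulting two-sided recursion. First I would establish the pointwise inequality packaged as Lemma \ref{inq error2}. Differentiating Equation \ref{lp objective} in $f_i$ for an unlabeled index $i$ and setting the gradient to zero gives the stationarity condition $f_i^* (\sum_j w_{ij} + \mu) = \sum_j w_{ij} f_j^* + \mu h_i$, so $f_i^*$ is a convex combination of its neighbors' values and the prior $h_i$. Writing $f_j^* - y_i = (f_j^* - y_j) + (y_j - y_i)$ and applying the triangle inequality yields
$$|f_i^* - y_i| \le \frac{\sum_j w_{ij}|f_j^* - y_j| + \sum_j w_{ij}|y_j - y_i| + \mu|h_i - y_i|}{\sum_j w_{ij} + \mu},$$
expressing the error of $i$ through the errors of its neighbors, a local smoothness term, and the prior error.

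Next I would aggregate this bound over a fixed layer $\mathcal{N}_k$. Clearing denominators and summing over $i \in \mathcal{N}_k$, the smoothness contributions collapse exactly to $s_k$ and the prior contributions to $\mu|\mathcal{N}_k|\alpha_k$, giving the numerator of $c_k$. The neighbor-error term splits according to whether $j$ lies in $\mathcal{N}_{k-1}$, $\mathcal{N}_k$, or $\mathcal{N}_{k+1}$ — the only possibilities under the shortest-path layering — and these three pieces are governed by $C_{\text{in}}(k)$, $C_{\text{bet}}(k)$, and $C_{\text{out}}(k)$ respectively. Invoking Assumption \ref{assum uniformity of error}, which asserts that the average error over the out-connected fraction of $\mathcal{N}_k$ is within a constant factor of $E_k$, lets me replace the weighted error sums over these subsets by constant multiples of the layer averages $E_{k-1}, E_k, E_{k+1}$. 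After dividing by $C_{\text{in}}(k) + \mu|\mathcal{N}_k|$, this produces a recursion of the schematic form $E_k \lesssim c_k + \gamma_k\,(\text{outward error}) + (\text{inward coupling to } E_{k-1})$, in which $\gamma_k = C_{\text{out}}(k)/(C_{\text{in}}(k) + \mu|\mathcal{N}_k|)$ emerges as the natural discount on errors that leak back from the farther layer.

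With the recursion in hand I would solve it in two steps. Define the backward quantity by $d_k = c_k + \gamma_k d_{k+1}$ with $d_{l+1} = 0$; unrolling this telescoping identity gives precisely $d_k = \sum_{i=k}^l c_i \prod_{j=k}^{i-1}\gamma_j$, the geometric accumulation of source terms from the current and all outer layers discounted by the out-flow ratios. Then, using that the error vanishes on the labeled layer ($E_0 = 0$ since $f_i^* = y_i$ for $i \le n$), a forward induction of the form $E_k \le E_{k-1} + d_k$ telescopes to $E_k \le \sum_{i=1}^k d_i$, which is the claimed bound up to the constants absorbed into the $O(\cdot)$.

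The main obstacle is the two-sided nature of the per-point bound: each $E_k$ is coupled simultaneously to the inner layer $\mathcal{N}_{k-1}$, the outer layer $\mathcal{N}_{k+1}$, and to itself through $C_{\text{bet}}(k)$, so there is no clean one-directional recursion until these couplings are disentangled. Assumption \ref{assum uniformity of error} is exactly what breaks this circularity, allowing the between- and out-flow error averages to be controlled by $E_k$ and thereby separating the outward geometric series (captured by $d_k$) from the inward accumulation (captured by $\sum_{i\le k} d_i$). The remaining delicate bookkeeping is the treatment of the per-node degree-plus-$\mu$ denominators, which must be bounded below by $C_{\text{in}}(k) + \mu|\mathcal{N}_k|$ uniformly enough that the clean forms of $\gamma_k$ and $c_k$ survive; this is where the constants hidden in the $O(\cdot)$ are spent.
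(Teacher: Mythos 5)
Your overall plan coincides with the paper's proof: your pointwise stationarity-plus-triangle-inequality bound is exactly Lemma \ref{inq error2}, your layer-wise aggregation producing $c_k$ and $\gamma_k$ is Lemma \ref{error diff ineq} (refined in Corollaries \ref{corr error diff bound 1-l-1} and \ref{error diff bound l}), and your two-stage resolution of the recursion --- backward unrolling of $d_k \le \gamma_k d_{k+1} + c_k$ with $d_l \le c_l$, followed by forward telescoping from $E_0 = 0$ --- is exactly Lemmas \ref{ineq x_l} and \ref{lemma diff}.

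There is, however, one step where your justification as stated would fail: the treatment of the intra-layer (``between'') error terms. You propose to invoke Assumption \ref{assum uniformity of error} to replace \emph{all three} neighbor-error pieces, including the $\mathcal{N}_k$-to-$\mathcal{N}_k$ piece, by constant multiples of the layer averages, and you later say explicitly that the assumption lets the between-flow error averages be controlled by $E_k$. The assumption provides no such control: it only constrains the ratios $a_k = \text{E}_{\text{in}}(k)/E_k$ and $b_k = \text{E}_{\text{out}}(k)/E_k$ of in- and out-errors. In the paper the between terms are never bounded at all --- they cancel \emph{exactly}, because by symmetry $w_{ij} = w_{ji}$ the between-weighted sum of $|f_i^*-y_i|$ appearing on the left and the between-weighted sum of $|f_j^*-y_j|$ appearing on the right are the identical quantity $C_{\text{bet}}(k)\,\text{E}_{\text{bet}}(k)$. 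This exactness is essential: if each side is only controlled up to a constant factor, the two between contributions need not match, leaving a residual of order $C_{\text{bet}}(k)E_k$ of uncontrolled sign; since $C_{\text{bet}}(k)$ can dwarf $C_{\text{in}}(k)+\mu|\mathcal{N}_k|$ (think of a dense layer) and appears in neither $c_k$ nor $\gamma_k$, the clean form of the bound could not survive. The fix is small --- observe the symmetry cancellation first, and apply the assumption only to the in/out terms, as the paper does --- but as written this step is a genuine gap.
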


\begin{proof}
(Sketch) The key idea of our proof is to upper bound each $E_i$ for $i \in \{1, \ldots, l\}$ by exploiting the insight that we can bound the average error of a set $N_i$ (points that are $i$ hops away from labeled points) with the average errors of its neighbors $N_{i-1}$ and $N_{i+1}$ by using Lemma~ \ref{inq error2}. We first bound $E_1$ with $E_0=0$ and $E_2$, then we bound $E_2$ with $E_1$ and $E_3$, and so on. See Appendix \ref{extended_theory} for the full version of our proof.
\end{proof}

$c_k$ is a combination of \textit{smoothness} $s_k$ and the \textit{prior accuracy} $\alpha_k$, and $\mu$ controls the trade-off between using information from the graph or the initialization. When $\mu = 0$, we recover the standard LPA without any prior. On the other hand, $\mu \to \infty$ is equivalent to only using the initial predictions.  

$d_k$ is a linear combination of $c_i$ for $k\leq i \leq l$ where the coefficient of each $c_i$ is given by $\prod _{j=k}^{i-1} \gamma_j$, representing the influence from $\mathcal{N}_i$. When $\gamma_j < 1$ (``In" $>$ ``Out"), the influence is exponentially small while when $\gamma_j > 1$ (``Out" $>$ ``In") the influence can be exponentially large. This aligns with our intuition that when we have more ``In" than ``Out", we will have a better guarantee. We remark that if $c_k = 0$, regardless of the product $\prod _{j=k}^{i-1} \gamma_j$, $c_k$ will make no contribution to the bound.

To tighten this upper bound, we have to reduce both $c_k$ and $\gamma_k$. 
In doing so, the value of $\mu$ is important; a larger value of $\mu$ reduces both $c_k$ and $\gamma_k$ by increasing their denominators. Thus, given similar levels of smoothness and prior accuracy ($\frac{s_k}{C_{\text{in}}(k)} \approx \alpha_k$), it is better to use a larger value of $\mu$, that is we should rely more on the prior information.

The \textit{number of hops} ($k$) from labeled points $L$ also plays a key role in the bound. The upper bound on $E_k$ is given by a linear combination of $k$ terms, so points that are closer to  $L$ will have a smaller $k$ and a better guarantee. This encourages us to have a more connected graph, requiring fewer hops to reach all points. However, adding noisy edges may potentially decrease the smoothness of the graph.


\begin{figure}[t]

\centering
\includegraphics[width=0.31\textwidth]{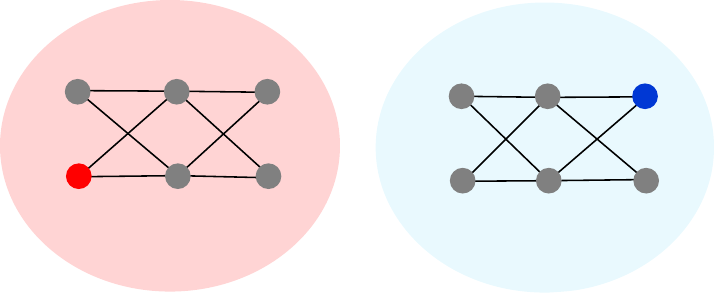}
\hfill
\includegraphics[width=0.31\textwidth]{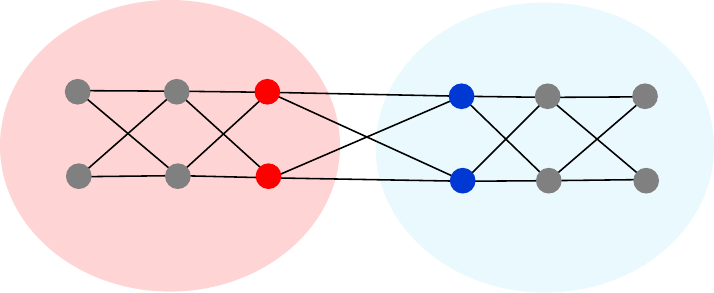}
\hfill
\includegraphics[width=0.31\textwidth]{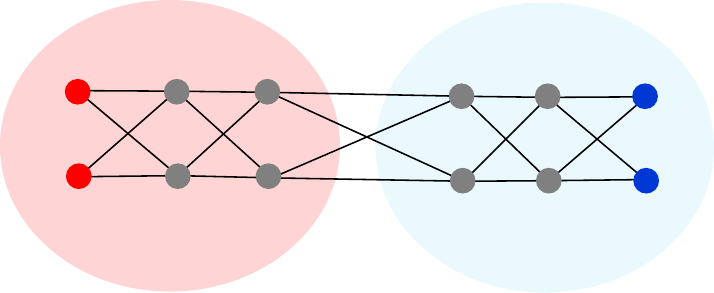}

\caption{
Example of graphs $G_1, G_2, G_3$ (left, mid, right) to compare our bound to existing bounds. The background color represents the true label class, and colored points represents labeled points.}
\label{fig: bound comparison}
\end{figure}

\subsection{Comparison with Prior (Spectral) Bounds}
We compare our bound with an existing bound that relies on spectral analysis \citep{Belkin2004SemiSupervisedLO}. This bound is for LPA with a soft constraint, given by the problem 
\begin{equation}
\label{LP objective soft}
    \min_{f\in \mathbb{R}^{n+m}} \sum_{i=1}^{n+m}\sum_{j=1}^{n+m}w_{ij}(f_i - f_j)^2 + \eta \sum_{i \leq n}\left(f_{i}-y_{i}\right)^{2}. 
\end{equation}

We define the empirical error and generalization error as:
\begin{equation*}
   R_{n}(f)=\frac{1}{n} \sum_{i = 1}^{n}\left(f_i-y_{i}\right)^{2}, \quad R(f)=\frac{1}{n+m}\sum_{i = 1}^{n+m}\left(f_i-y_{i}\right)^{2}
\end{equation*}
As we do not have a hard constraint ($f_i = y_i$ for $i\leq n$), the empirical error is not necessary zero. 



\begin{theorem}
(Generalization performance of graph regularization (simplified version))
\label{thm: belkin bound}
Let $f$ be the optimal solution of Equation \ref{LP objective soft}, $n\geq 4$ be the number of randomly sampled labeled points from some graph $G$ and $\lambda_{1}$ be the second smallest eigenvalue of the Laplacian matrix of $G$. With probability $1 - \delta$, we have
$$
|R_{n}(f)-R(f)| \leq \beta+\sqrt{\frac{2 \log (2 / \delta)}{n}}\left(n \beta+4\right)
$$
where
$$
\beta=\frac{3\eta^2\sqrt{n}}{(\lambda_1- \eta)^{2}}+\frac{4\eta}{\lambda_1- \eta}
$$

\end{theorem}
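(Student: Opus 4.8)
The plan is to recognize Equation \ref{LP objective soft} as a regularized empirical risk minimization problem and to control its generalization gap through the algorithmic-stability framework of Bousquet and Elisseeff. The right-hand side of the claimed bound has exactly the shape of a uniform-stability guarantee: a leading term $\beta$ plus a McDiarmid fluctuation $\sqrt{2\log(2/\delta)/n}\,(n\beta + 4)$. It therefore suffices to show that the map from the $n$ randomly sampled labeled examples to the optimizer $f$ is $\beta$-uniformly stable with the stated $\beta$, and that the squared loss evaluated at $f$ is bounded by a constant, which is where the range-dependent $(K+M)^2$ factor from the original formulation is replaced by the absolute constant $4$ under a normalization of the labels and of the admissible predictions.

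First I would rewrite the objective as $S(f) + \eta \sum_{i \le n}(f_i - y_i)^2$ with the smoothness functional $S(f) = \sum_{i,j} w_{ij}(f_i - f_j)^2$, and diagonalize $S$ in the eigenbasis of the graph Laplacian $L$: writing $f = \sum_t a_t v_t$ with $L v_t = \lambda_t v_t$ and $0 = \lambda_0 < \lambda_1 \le \lambda_2 \le \cdots$, we obtain $S(f) = \sum_t \lambda_t a_t^2$ up to the global factor relating $S$ to $f^\top L f$. The key structural fact is that $S$ is coercive, with curvature modulus $\lambda_1$, on the subspace orthogonal to the constant vector $\mathbf{1}$; the zero eigenvalue $\lambda_0$ (constant functions) must be handled separately, since along $\mathbf{1}$ the regularizer provides no control and stability has to be inherited entirely from the data-fitting term.

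The technical core is then to bound the uniform stability $\beta$, i.e.\ the change in the optimizer $f \mapsto f'$ when one labeled example is replaced. I would write the first-order optimality (normal) equations for $f$ and $f'$, subtract them, and bound $\|f - f'\|$ using the coercivity modulus $\lambda_1$ of $S$ on the relevant subspace together with the Lipschitz constant of the squared loss on the (a priori bounded) solution set. Because the squared loss is only locally Lipschitz, a preliminary step is to establish an $\ell_\infty$ bound on the minimizer so that a uniform Lipschitz constant is available. Propagating this perturbation bound through the loss yields $\beta$ as a sum of a first-order term $4\eta/(\lambda_1 - \eta)$ and a second-order term $3\eta^2\sqrt{n}/(\lambda_1 - \eta)^2$. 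Finally I would substitute this $\beta$ into the Bousquet--Elisseeff concentration inequality for uniformly stable algorithms to obtain the high-probability statement in exactly the stated form.

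The main obstacle is the stability computation, and specifically obtaining the denominator $\lambda_1 - \eta$ (and hence identifying the non-vacuous regime $\lambda_1 > \eta$). This requires carefully tracking how the data-fitting term interacts with the spectral gap of $L$: the regularizer supplies curvature $\lambda_1$ only transverse to $\mathbf{1}$, while the term $\eta\sum_{i\le n}(f_i - y_i)^2$ both contributes to stability and competes with the regularizer, and it is this competition that produces a $-\eta$ rather than a $+\eta$ in the denominator. Handling the degenerate constant direction and the local rather than global Lipschitzness of the squared loss are the two places where the argument is most delicate; the concluding concentration step is a routine application of McDiarmid's inequality.
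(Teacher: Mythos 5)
Your plan is correct in outline, but note that the paper never proves this theorem itself: it is imported from \cite{Belkin2004SemiSupervisedLO}, and the paper's entire argument (Appendix \ref{appendix: spectral bound}) is to restate the original bound --- which allows each vertex to be labeled up to $t$ times, assumes $|y_i|\le M$ and $|f(\mathbf{x})|\le K$, and has $\beta=\frac{3\eta^2\sqrt{tn}}{(\lambda_1-\eta t)^2}+\frac{4\eta M}{\lambda_1-\eta t}$ with deviation term $n\beta+(K+M)^2$ --- and then to set $t=1$, $M=1$, $K=1$, giving $(K+M)^2=4$ and denominator $\lambda_1-\eta$. What you outline is instead a reconstruction of the proof of the \emph{cited} result, and it does match the machinery Belkin et al.\ actually use: a uniform-stability (Bousquet--Elisseeff) argument in which the spectral gap $\lambda_1$ supplies coercivity transverse to the constant direction of the Laplacian, followed by the McDiarmid-type concentration bound for stable algorithms, which is exactly where the shape $\beta+\sqrt{2\log(2/\delta)/n}\,(n\beta+4)$ originates. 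Your route buys a self-contained proof where the paper only needs an import; the costs are that (i) your sketch asserts rather than derives the constants $3$, $4$, and the $-\eta$ in the denominator, so a genuinely blind proof would still have to carry the stability computation through to the end, and (ii) you propose to prove, as lemmas, facts the original theorem simply \emph{assumes}: the $\ell_\infty$ bound $K$ on the minimizer (derivable here by a clipping/maximum-principle argument applied to Equation \ref{LP objective soft}) and, implicitly, the conditioning on sampling multiplicity at most $t$, which is what makes the labeled points i.i.d.-uniform draws from the vertex set so that the concentration step applies to the transductive risk $R(f)$; the simplified statement quietly suppresses both hypotheses.
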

The original version of this bound as in \citep{Belkin2004SemiSupervisedLO} is in Appendix \ref{appendix: spectral bound}. We consider graphs $G_1,G_2,G_3$ in Figure \ref{fig: bound comparison} to compare the bounds.  Here $v(G)$ refers to the value of parameter $v$ for a graph $G$. 

First, we note that this bounds the difference between the empirical error and the generalization error, while our bound is for the generalization error itself. The spectral bound assumes that we have randomly sampled initial labeled points, and thus the bound only depend on the number of labeled points. For example, $G_2$ and $G_3$ have the same underlying graph and the same number of labeled points, so they have the same spectral bound of generalization error, which relies on the empirical error. In contrast, our bound takes the position of labeled points into account to provide an explicit explanation why LPA performs better on $G_2$ than $G_3$. We can see this since $G_2$ is smoother than $G_3$ ($c_2(G_2) = 0, c_2(G_3) = \frac{s_2(G_3)}{C_{\text{in}}(2)(G_3)} = \frac{8}{8} = 1$).

The spectral bound depends on the second smallest eigenvalue $\lambda_1$. If the graph is not well clustered, $\lambda_1$ will be small. For example, $\lambda_1(G_1) = 2, \lambda_1(G_2) = 0.53$. \cite{Belkin2004SemiSupervisedLO} suggests that when $\lambda_1$ is small, we should cut the graph in two, using the eigenvector corresponding to $\lambda_1$, and optimize the objective separately. Our bound works for any graph, in fact, our bound is also tight on $G_2$ where LPA achieves zero error (as $c_1(G_2) = c_2(G_2) = 0$).  Also, as $\eta \to \infty$, the objective of Equation \ref{LP objective soft} is equivalent to Equation \ref{lp objective}. The, the spectral bound takes on value $\beta \to 3\sqrt{n} - 4$, which implies that it does not depend on the geometry of the graph ($\lambda_1$) anymore. 
Finally, the spectral bound does not use any prior information, while our bound captures the interplay between the quality of graph and the quality of prior information.

\section{Label Propagation with multiple sources of information}
\begin{figure}
\vspace{-8mm}
    \centering
    \includegraphics[width = 0.8\columnwidth]{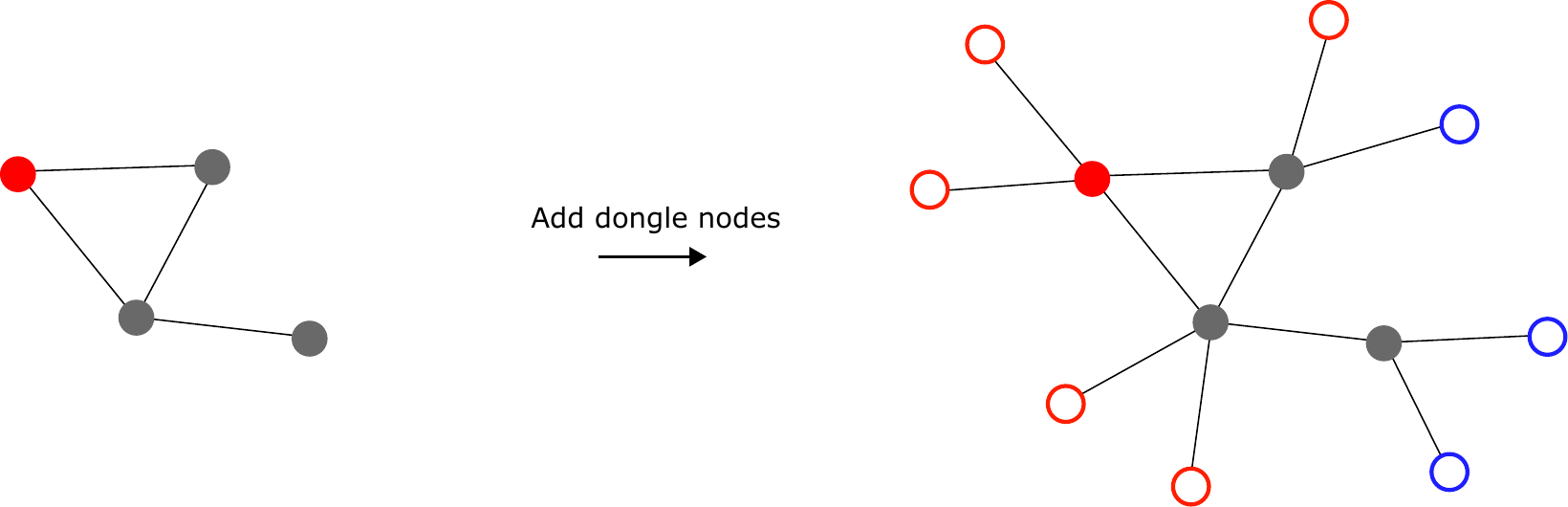}
    \caption{One can turn the label propagation with multiple sources of information into a standard label propagation problem by augmenting the graph $G$ (left) with dongle nodes (right). The colored point represents a labeled point. The points without the shade are dongle nodes.}
    \label{fig:dongle nodes}
\end{figure}
We now consider the setting where we observe multiple sources of prior information and provide a framework to incorporate them into LPA. Assume that we have multiple initial noisy predictions $h_i(x): \mathcal{X} \to [0,1]$ for $i=1,2,\dots, k$. A natural extension of the LPA objective is given by
\begin{equation}
    \label{eq: dongle objective old}
     \sum_{i=1}^{n+m}\sum_{j=1}^{n+m}w_{ij}(f_i - f_j)^2  + \sum_{i=1}^{n+m} \sum_{j=1}^k (f_i - h_j(x_i))^2 \alpha_j(x_i)  
\end{equation}
such that $f_i = y_i$ for $i \leq n$. The first term encourages our prediction to be smooth with respect to a graph while the second term encourage our prediction to also be close to the initial predictions. The function $\alpha_j: \mathcal{X} \to [0,\infty)$, which we need to learn, controls how close we want our final prediction to be to each initial prediction $h_j$. We can turn this into a standard label propagation problem for which we have an efficient iterative method to solve by augmenting the graph $G$ with dongle nodes (Appendix \ref{appendix: dongle nodes}). Given a fixed $\alpha_j$ for each $j = 1,\dots, k$, we can also show that there exists an initial prediction $h$ where the solution of Equation \ref{eq: dongle objective old} is equivalent to a solution of LPA with a single initial prediction $h$ for which our analysis applies (Appendix \ref{appendix: connection to LPA with single pred}). 

A key question is for this framework is ``how to choose $\alpha_j$ ?" In the ideal setting, we set $\alpha_j(x_i) = 0$ when $h_j$ makes an incorrect prediction for point $x_i$ and set $\alpha_j(x_i) = 1$ when $h_j$ makes a correct prediction,
\begin{equation*}
    \alpha_j(x_i) = 1[1[h_j(x_i) > 0.5] = y_i].
\end{equation*}
However, this is not applicable in a practical setting as knowing when $h_j$ is correct or incorrect at a point $x_i$ is equivalent to knowing the corresponding true label $y_i$ of that point. We now investigate different approaches to select the function $\alpha_j$. 

\subsection{Estimated accuracy}
Although we do not know whether $h_j$ will make a correct prediction at each point $x_i$, we can still approximate its accuracy over the entire dataset. We can use techniques from crowd-sourcing literature or weak supervision literature to approximate the accuracy of each noisy labeler. Then, we can set $\alpha_j$ as the estimated accuracy of a $h_j$,
\begin{equation*}
    \alpha_j(x_i) = \mathbb{P}(1[h_j(x) > 0.5] = y) = p_j.
\end{equation*}

We also consider setting $\alpha_j = \operatorname{ln}(\frac{p_j}{1-p_j})$ as in the boosting literature (in Appendix \ref{appendix: boosting}).


\subsection{Probabilistic approach}
We can also consider a probabilistic approach to select the function $\alpha_j$. Let each $h_j$ be sampled from a Gaussian distribution
\begin{equation*}
    h_j \sim \mathcal{N}(y, \sigma_j(x)^2)
\end{equation*}
and let $f$ follow the Gaussian field as in \cite{label_prop}, where
\begin{equation*}
    \rho_\beta(f) \propto \operatorname{exp}(-\beta E(f)), \quad E(f) = \frac{1}{2}\sum_{i=1}^{n+m}\sum_{j=1}^{n+m}w_{ij}(f_i - f_j)^2.
\end{equation*}
Then, the log-likelihood is given by
\begin{equation*}
    l(f) = \operatorname{constant} - \sum_{i=1}^{n+m}\sum_{j = 1}^{k}\frac{1}{2\sigma_j(x_i)^2}(h_j(x_i) -f_i)^2 - \frac{\beta}{2}\sum_{i=1}^{n+m}\sum_{j=1}^{n+m}w_{ij}(f_i - f_j)^2.
\end{equation*}
This resembles the objective of Equation \ref{eq: dongle objective old} and suggests that we should set our function $\alpha_j$ as
\begin{equation*}
    \alpha_j(x_i) = \frac{1}{\sigma_j(x_i)^2},
\end{equation*}
where $\sigma_j(x_i)^2$ is the variance of $h_j$ at point $x_i$. With access to a small set of labeled data points, we can estimate $\sigma_j(x_i)$ through heteroscedastic regression \citep{wasserman2006all}, which is further discussed in Appendix \ref{appendix: heteroscedacity regression}. We note that this function $\alpha_j$ changes over values of $x$ as it is computed through regression, while the accuracy-based weighting has a constant value for $\alpha_j$.

\section{Experiments}


We connect LPA and the field of weak supervision by using weak labelers as our source of prior information. Formally, a set of weak labelers is given by $\lambda = \{\lambda_1, ..., \lambda_k \}$, where each $\lambda_i : \mathcal{X} \to \{0, 1, \emptyset \}$ and $\emptyset$ denotes an abstention. Here, we consider LPA with a single source of prior information $h(x) = h_\lambda(x)$ is an aggregation of weak labelers, which we refer to as \textbf{LPA+WL}. For this method, we use Snorkel MeTaL \citep{multi_task_weak_supervision} as our aggregation scheme. We also consider our extensions of LPA with multiple sources of prior information when we set $h_i = \lambda_i$, for each weak labeler.  We refer to our extensions of LPA that incorporate weak labelers through dongle nodes as \textbf{LPAD (A)} and \textbf{LPAD (P)}, where the last letter denotes our techniques to estimate the weighted edges of these dongle nodes (accuracy, and probabilistic approach). For methods that require accuracies, we use accuracies estimated via Snorkel MeTaL. 
We note that LPAD (A) and LPA+WL are both using the Snorkel estimated accuracy, we provide a discussion on their difference in Appendix \ref{appendix: difference LPA+WL, LPAD(A)}. For LPA + WL, we set $\mu = 1$.
Further experimental details for our methods and the baselines are in Appendix \ref{exp_detail}. 

We compare our approaches to existing weak supervision methods, standard LPA, and other semi-supervised baselines on 4 binary classification datasets from the WRENCH benchmark \citep{wrench}. The features from these text and image datasets are extracted from BERT \citep{kenton2019bert} and ResNet \citep{he2016deep} respectively. On each dataset, we balance the training data to have equal class proportions. To generate a small set of labeled data, we randomly sample $n = 100$ points from the training data. The remaining data serves as our unlabeled training data. For all graph-based methods, we construct a graph $G$ with average degree $t$, which is a hyperparameter of our method, and with edges that have value 1. More information about $t$ and other hyperparameters of all approaches are in Appendix \ref{appendix: hyperparameter opt}. Code to replicate our experiments can be found here\footnote{\href{https://github.com/dsam99/label_propagation_weak_supervision}{https://github.com/dsam99/label\_propagation\_weak\_supervision}}.

\renewcommand{\arraystretch}{1.35}
\begin{table*}[t]
    \centering
    \resizebox{0.85\columnwidth}{!}{%
    \setlength{\tabcolsep}{8pt}
    \begin{tabular}{l | c c c c} \toprule
         Method & Youtube & SMS & Basketball & CDR  \\ \midrule
         LPA & 82.00 $\pm$ 1.37 & 94.32 $\pm$ 0.45 & 78.71 $\pm$ 2.41 & 67.41 $\pm$ 0.82 \\
         GCN & 84.16 $\pm$ 0.95 & 94.32 $\pm$ 1.02 & 61.34 $\pm$ 1.16 & 65.42 $\pm$ 1.00 \\ 
         Snorkel + L & 87.44 $\pm$ 0.47 & 96.24 $\pm$ 0.32 & 82.08 $\pm$ 0.83 & 68.03 $\pm$ 0.28 \\
         FS + L & 87.76 $\pm$ 0.51 & 94.84 $\pm$ 0.43 & 70.23 $\pm$ 1.20 & 67.70 $\pm$ 0.29 \\
         CLL & 88.56 $\pm$ 0.80 & 94.56 $\pm$ 0.73 & 77.02 $\pm$ 3.96 & \textcolor{blue}{68.52 $\pm$ 0.58}  \\
         Liger + L &\textcolor{blue}{88.72 $\pm$ 0.58} & 96.08 $\pm$ 0.38 &80.98 $\pm$ 1.71 & 67.33 $\pm$ 0.18\\ \midrule
         \textbf{LPA + WL} & 88.32 $\pm$ 0.50  & \textcolor{red}{96.80 $\pm$ 0.36} & \textcolor{red}{83.13 $\pm$ 1.43} & 67.61 $\pm$ 0.19 \\
         \textbf{LPAD (A)} & \textcolor{red}{90.32 $\pm$ 0.43} & 96.32 $\pm$ 0.52 & \textcolor{blue}{83.06 $\pm$ 0.74} & 68.13 $\pm$ 0.74 \\
         \textbf{LPAD (P)} & 87.84 $\pm$ 0.53 & \textcolor{blue}{96.64 $\pm$ 0.39} & 82.01 $\pm$ 2.96 & \textcolor{red}{68.97 $\pm$ 0.51} \\
         \bottomrule
    \end{tabular}
    }
    \caption{We report accuracy ($\pm$ s.e.) on a held-out \textbf{test dataset} for training an endmodel on pseudolabeled training data, when averaged over 5 seeds. We highlight the best performing method in red and the second best performing method in blue.}
    \label{tab:em_table}
\end{table*}




\subsection{Baselines}
We compare our methods against various semi-supervised and existing weakly supervised learning approaches. We use ground truth labels instead of pseudolabels on the 100 labeled points in methods denoted with (+L). In all these approaches, we train an inductive endmodel on the pseudolabeled training data, as is standard in WSL literature.

\textbf{Label Propagation (LPA):} The standard label propagation baseline \citep{label_prop} on graph $G$. This does not take into account weak labeler information.

\textbf{Graph Convolutional Network (GCN):}
We provide results for a standard graph convolutional network \citep{GCN}. This method also does not take into account weak labeler information.

\textbf{Snorkel + L:} A weakly supervised learning aggregation scheme, Snorkel MeTaL \citep{snorkel, multi_task_weak_supervision}, which produces pseudolabels through a graphical model. 

\textbf{FlyingSquid + L (\textbf{FS + L}):} Another weakly supervised method that estimates parameters of a graphical model via a triplet method \citep{Fu2020FastAT}.

\textbf{Constrained Label Learning (CLL):} A method that produces an labeling contained within a feasible space constrained by the error rates of weak labelers \citep{Arachie2021ConstrainedLF}. We use the small set of labeled data to generate the error rates of the weak labelers. 

\textbf{Liger + L:} A method that extends weak labelers using the smoothness of pretrained models and develops cluster-level aggregations \citep{Chen2022ShoringUT}. This method uses FS \citep{Fu2020FastAT} as a base aggregation scheme. 

\subsection{Results}

We provide results for test accuracy of training an endmodel on pseudolabels generated by the baselines and our methods in Table \ref{tab:em_table}. Our results demonstrate that incorporating weak labels into label propagation improves upon the performance of LPA across all datasets (LPA + WL $>$ LPA). In addition, in almost every dataset, using LPA to incorporate smoothness improves upon the prior aggregation of weak labels (LPA + WL $>$ Snorkel + L). Our methods also outperform other weakly supervised aggregation methods, in most cases. The best performing baseline we compare to is Liger or CLL, which each only marginally outperforms some of our methods on one dataset. We note that there is not clear best weighting scheme between (A) and (P), although they outperform most baselines on almost all tasks. In addition, one of our methods is the best performing approach on all datasets. 

\begin{table*}[t]
    \centering
    \setlength{\tabcolsep}{5pt}
    {\renewcommand{\arraystretch}{1.35}
    \resizebox{\columnwidth}{!}{%
    \begin{tabular}{l| cc cc cc} \toprule
         & \multicolumn{2}{c}{YouTube} & \multicolumn{2}{c}{SMS} & \multicolumn{2}{c}{CDR} \\
         \cmidrule(lr){2-3} \cmidrule(lr){4-5} \cmidrule(lr){6-7} 
         Method & Accuracy & Coverage & Accuracy & Coverage & Accuracy & Coverage \\ \midrule
         Snorkel + L       & 75.96 $\pm$ 0.13 & 89.75 $\pm$ 0.08 & 70.40 $\pm$ 0.34 & 48.31 $\pm$ 0.52 & 70.64 $\pm$ 0.12 & 92.41 $\pm$ 0.11 \\
         LPA               & 55.98 $\pm$ 0.08 & 11.97 $\pm$ 0.16 & 54.71 $\pm$ 0.01 & 9.42 $\pm$ 0.03 & 50.79 $\pm$ 0.00 & 1.58 $\pm$ 0.00 \\
          Liger + L & 81.06 $\pm$ 0.47 & 99.98 $\pm$ 0.01 & \textbf{78.62 $\pm$ 0.23} & 96.01 $\pm$ 0.20 & 50.56 $\pm$ 0.13 & 81.79 $\pm$ 10.72  \\ \midrule
         \textbf{LPA + WL} & 76.02 $\pm$ 0.12 & 89.81 $\pm$ 0.08 & 70.75 $\pm$ 0.35 & 49.03 $\pm$ 0.52 & 70.64 $\pm$ 0.12 & 92.41 $\pm$ 0.11 \\
         \textbf{LPAD (A)} & 84.03 $\pm$ 0.15 & 89.81 $\pm$ 0.08 & 70.80 $\pm$ 0.26 & 49.00 $\pm$ 0.51 & \textbf{72.87 $\pm$ 0.08} & 91.66 $\pm$ 0.49 \\ 
         \textbf{LPAD (P)} & \textbf{89.52 $\pm$ 0.13} & 89.75 $\pm$ 0.08 & {70.98 $\pm$ 0.27} & 49.03 $\pm$ 0.52 & 71.91 $\pm$ 0.25 & 92.22 $\pm$ 0.11 \\ \bottomrule
    \end{tabular}
    }
    }
    \caption{We report accuracy and coverage ($\pm$ s.e.) of the various label propagation methods on the full partially labeled \textbf{training data} (i.e,  pseudolabel accuracies), when averaged over 5 seeds. We also add Liger + L as it looks to improve coverage by extending weak labelers. We bold the best performing method (in terms of Accuracy) on each dataset.}
    \label{tab:lp_accs}
\end{table*}

We also report the accuracy of standard LPA and our methods on the labeled and unlabeled training data in Table \ref{tab:lp_accs} (i.e, evaluating pseudolabel accuracies). We measure the accuracy of a particular approach on abstained datapoints as $50\%$ (i.e, random guessing on binary data) on all abstained points as the method has no information on these points. Results for additional datasets are deferred to Appendix \ref{tab:lp_accs_full}; on these datasets, the weak labeler coverage is almost 100\% of the data, so coverage is roughly the same across our methods and the baselines. We observe that coverage drastically increases when using our weakly supervised prior (Table \ref{tab:lp_accs}) over the standard LPA and slightly over that of Snorkel. We also observe that our method improves upon the base aggregation method of Snorkel on almost all datasets, improving both overall accuracy and coverage due to the propagation of information to nearby points. We note that Liger has much higher coverage on YouTube and SMS, although the accuracy on this larger set is much worse (see Table \ref{tab:lp_accs_full} in the Appendix).

\section{Discussion}

We provide a novel theoretical perspective on LPA that takes advantage of useful prior information. Our bound differs significantly from existing spectral bounds, and provides insight into how to best incorporate priors into LPA. We note that our analysis is general and works with \textit{any} initialization $h$. We also provide a framework to handle multiple sources of side information and empirical results for the setting of weak supervision. Further work can incorporate other types of prior information into LPA, such as in the recent line of work of learning with past predictions \citep{mitzenmacher2020algorithms, khodak2022learning}. In addition, our connections of LPA with weakly supervised learning illustrate (both theoretically and empirically) that these methods benefit each other. As a whole, our results support adding smoothness to the standard WSL pipeline and can encourage further connections between semi-supervised learning algorithms and WSL. We note a few limitations of our method; our bound depends on several parameters, smoothness $s_k$, prior information accuracy $\alpha_k$; in general, we may need to approximate these values through labeled data. It remains an open question of how to do this effectively. In addition, we assume a uniformity assumption that the average error of points with ``Out" connections from $\mathcal{N}_k$ is of a constant factor of the average error in $\mathcal{N}_k$. Relaxing the bound beyond this assumption is also an open question.


We remark that our approach bridges the gap between classical label propagation and modern deep learning by incorporating information from pretrained models to construct our graph $G$. Since we construct $G$ through Euclidean distance, our work uses notions of smoothness in the learnt embeddings, which is also noted in \cite{Chen2022ShoringUT}. As we gain access to more powerful pretrained models, our approach will also benefit through a better graph $G$.
This method also provides a natural framework to combine information from large pretrained models (via our graph $G$) and rules provided by domain experts (through our prior predictions $h_1, \dots, h_k$).

\section*{Acknowledgements}

This work was supported in part by NSF grants IIS-1909816, IIS-1955532, IIS-2211907, CCF-1910321 and DARPA under cooperative agreement HR00112020003 and funding from Bosch Center for Artificial Intelligence and the ARCS Foundation.

\bibliography{iclr2023_conference}
\bibliographystyle{iclr2023_conference}

\appendix



\clearpage

\section{Closed form solution of LPA} \label{solve_lp}
We provide a closed form solution of the following optimization problem.
\begin{align*}
\label{lp objective}
    \min_{f\in \mathbb{R}^{n+m}} \frac{1}{2}(\sum_{i,j}w_{ij}(f_i - f_j)^2 + \mu ||f - h||^2_2 ) \text{ s.t. } f_i = y_i \text{ for } i \leq n.
\end{align*}
Here we abuse notation $h$ as a vector $(h(x_1), \dots, h(x_{n+m}))$ and $h_i = h(x_i)$. For simplicity we refer $i\in L$ as $1 \leq i \leq n$ and $i \in U$ as $n+1 \leq i \leq n+m$. First, note that
\begin{align*}
 \frac{1}{2}(\sum_{i,j}w_{ij}(f_i - f_j)^2 & = \frac{1}{2}(\sum_{i \in L}\sum_{j \in L}w_{ij}(f_i - f_j)^2 + 2\sum_{i \in L}\sum_{j \in U}w_{ij}(f_i - f_j)^2 \\
 & \quad + \sum_{i \in U}\sum_{j \in U}w_{ij}(f_i - f_j)^2)
\end{align*}
The first term is a constant as $f_i = y_i$ for $i \in L$. Denote $\textbf{f}_L \in\mathbb{R}^n$ is a column vector with entry $f_i$ for $i \in L$ and $\textbf{f}_U\in \mathbb{R}^m$ is a column vector with entry $f_j$ for $j \in U$. We sometimes refer $w_{i,j}$ to $w_{ij}$. For the second term we have
\begin{align*}
    \sum_{i \in L}\sum_{j \in U}w_{ij}(f_i - f_j)^2 &= \sum_{i \in L}\sum_{j \in U}w_{ij}(f_i^2 -2f_if_j + f_j^2)\\
    &=\sum_{i \in L} (\sum_{j \in U} w_{ij}) f_i^2 + \sum_{j \in U}(\sum_{i \in L} w_{ij})f_j^2 - 2\sum_{i\in L}\sum_{j\in U}f_iw_{ij}f_j \\
    &= \text{constant} + \textbf{f}_U^TD_{UL}\textbf{f}_U - 2\textbf{f}_L^TW_{LU}\textbf{f}_U.
\end{align*}
where $D_{UL} \in \mathbb{R}^{m\times m}$ is a diagonal matrix with $(D_{UL})_{jj} = \sum_{i\in L}w_{i, j+n}$ and $W_{LU} \in \mathbb{R}^{n \times m}$ is a matrix with entry $(W_{LU})_{ij} = w_{i,j+n}$. For the third term, we have
\begin{align*}
    \frac{1}{2}\sum_{i \in U}\sum_{j \in U}w_{ij}(f_i - f_j)^2 &= \frac{1}{2}\sum_{i \in U}\sum_{j \in U}w_{ij}(f_i^2 -2f_if_j + f_j^2)\\
    &= \sum_{i \in U}(\sum_{j \in U}w_{ij})f_i^2 - \sum_{i \in U}\sum_{j \in U} f_iw_{ij}f_j\\
    &= \textbf{f}_U^TD_{UU}\textbf{f}_U - \textbf{f}_U^TW_{UU}\textbf{f}_U
\end{align*}
where $D_{UU} \mathbb{R}^{m \times m}$ is a diagonal matrix with $(D_{UU})_{jj} = \sum_{i \in U} w_{i,j+n}$ and $W_{UU} \in \mathbb{R}^{m \times m}$ is a matrix with entry $(W_{UU})_{ij} = w_{i+n, j+n}.$ Therefore, the overall objective is given by
\begin{equation*}
    \min_{\textbf{f}_U\in \mathbb{R}^{m}} \text{constant} + \textbf{f}_U^T(D_{UL}+ D_{UU} - W_{UU})\textbf{f}_U - 2\textbf{f}_L^TW_{LU}\textbf{f}_U + \frac{\mu}{2}||f - h||^2_2.
\end{equation*}
Differentiating with respect to $\textbf{f}_U$ and setting equal to $0$, we have
\begin{align*}
    2(D_{UL}+ D_{UU} - W_{UU})\textbf{f}_U - 2W_{LU}^T\textbf{f}_L + 2\mu(\textbf{f}_U - \textbf{h}_U) = 0\\
    \textbf{f}_U = (D_{UL}+ D_{UU} - W_{UU} + \mu I_d)^{-1}(\mu \textbf{h}_U + W_{LU}^T\textbf{f}_L)
\end{align*}
when $\textbf{h}_U \in \mathbb{R}^m$ with $(\textbf{h}_U)_j = h_{j+n}$. We can also extend this to the case when $\mu \in \mathbb{R}^{m+n}$, where we have a different value of $\mu_i$ for each $i$. The optimization objective is given by
\begin{equation*}
    \min_{f\in \mathbb{R}^{n+m}} \frac{1}{2}(\sum_{i,j}w_{ij}(f_i - f_j)^2 + \sum_{i}\mu_i(f_i - h_i)^2 ) \text{ s.t. } f_i = y_i \text{ for } i \leq n.
\end{equation*}
We can write the regularization term for $U$ as
\begin{equation*}
    \sum_{i \in U}\mu_i(f_i - h_i)^2 = (\textbf{f}_U - \textbf{h}_U)^TD_{\mu}(\textbf{f}_U - \textbf{h}_U)
\end{equation*}
when $D_\mu \in \mathbb{R}^m$ is a diagonal matrix with entry $(D_\mu)_{jj} = \mu_{j+n}$. We can write the optimization objective as
\begin{equation*}
    \min_{\textbf{f}_U\in \mathbb{R}^{m}} \text{constant} + \textbf{f}_U^T(D_{UL}+ D_{UU} - W_{UU})\textbf{f}_U - 2\textbf{f}_L^TW_{LU}\textbf{f}_U + (\textbf{f}_U - \textbf{h}_U)^TD_{\mu}(\textbf{f}_U - \textbf{h}_U).
\end{equation*}
Differentiating with respect to $\textbf{f}_U$ and setting equal  to $0$, we have
\begin{align*}
        2(D_{UL}+ D_{UU} - W_{UU})\textbf{f}_U - 2W_{LU}^T\textbf{f}_L + 2D_\mu(\textbf{f}_U - \textbf{h}_U) = 0\\
    \textbf{f}_U = (D_{UL}+ D_{UU} + D_\mu - W_{UU})^{-1}(D_\mu\textbf{h}_U + W_{LU}^T\textbf{f}_L)
\end{align*}

\section{Theoretical Results} \label{extended_theory}

First, we analyze the closed form solution of the LPA in Lemma \ref{optimal f}.

\begin{lemma}
\label{optimal f}
Let $f^*$ be the optimal solution of the optimization problem of Equation \ref{lp objective} then for $n+1 \leq i \leq n+m$,
\begin{equation*}
    f^*_i = \frac{\sum_{j}w_{ij}f^*_j + \mu h_i}{\sum_{j}w_{ij} + \mu}
\end{equation*}
\end{lemma}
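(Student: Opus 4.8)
The plan is to characterize $f^*$ through a first-order stationarity argument. The objective in Equation \ref{lp objective} is a convex quadratic in $f$: its smoothness term $\tfrac{1}{2}\sum_{i,j} w_{ij}(f_i-f_j)^2$ is a graph-Laplacian quadratic form, hence positive semidefinite, and the prior term $\tfrac{\mu}{2}\sum_i (f_i-h_i)^2$ is positive semidefinite as well, so any feasible point at which the gradient vanishes in the free coordinates is a global minimizer. The hard constraint fixes $f_i = y_i$ only for the labeled indices $i \le n$; the coordinates $f_i$ with $n+1 \le i \le n+m$ are unconstrained. Therefore at the optimum $f^*$ I would set the partial derivative of the objective with respect to each unlabeled coordinate $f_i$ to zero and solve for $f_i^*$.

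For the computation, fix an unlabeled index $i$ and note that only the terms of the double sum involving $f_i$ contribute to the partial derivative. Differentiating $\tfrac{1}{2}\sum_{a,b} w_{ab}(f_a-f_b)^2$ with respect to $f_i$ produces $\sum_b w_{ib}(f_i-f_b) - \sum_a w_{ai}(f_a-f_i)$, and invoking the symmetry $w_{ij}=w_{ji}$ of the undirected adjacency matrix collapses these two sums into a single multiple of $\sum_j w_{ij}(f_i-f_j)$. Differentiating the prior term contributes a multiple of $(f_i-h_i)$. Setting the total gradient to zero then yields a scalar linear equation of the form $\left(\sum_j w_{ij} + \mu\right) f_i = \sum_j w_{ij} f_j + \mu h_i$, and isolating $f_i$ gives the claimed closed form $f^*_i = \frac{\sum_j w_{ij} f^*_j + \mu h_i}{\sum_j w_{ij} + \mu}$.

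I expect the only delicate point to be the constant bookkeeping: the factor of $2$ arising both from differentiating the squared differences and from the double-counting in the symmetric sum $\sum_{i,j}$ must be tracked consistently against the weight placed on the prior term, so that the normalization in the numerator and denominator matches the statement. Getting the symmetry collapse and these constants right is the crux; everything else is routine.

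As an independent cross-check, the same identity can be obtained by reading off row $i$ of the normal equations $(D_{UL}+D_{UU}-W_{UU}+\mu I)\,\textbf{f}_U = \mu\,\textbf{h}_U + W_{LU}^T\textbf{f}_L$ already derived in Appendix \ref{solve_lp}: there the diagonal entry $(D_{UL}+D_{UU})_{ii} = \sum_j w_{ij}$ is the full degree of node $i$, while the off-diagonal $-W_{UU}$ terms together with the $W_{LU}^T\textbf{f}_L$ term on the right reassemble $\sum_j w_{ij} f_j$ over both labeled and unlabeled neighbors, and the $\mu I$ and $\mu\textbf{h}_U$ terms supply the remaining contributions. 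This confirms the per-coordinate formula and serves as a consistency verification of the constant bookkeeping flagged above.
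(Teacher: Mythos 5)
Your proposal is correct and follows essentially the same route as the paper's proof: differentiate the objective with respect to each unlabeled coordinate $f_i$, set the partial derivative to zero, and rearrange to isolate $f_i^*$. Your additions---the convexity remark guaranteeing that stationarity implies global optimality, and the cross-check against the normal equations of Appendix \ref{solve_lp}---make the argument slightly more careful than the paper's own (which silently absorbs the factor of $2$ from the symmetric double sum into the scaling of $\mu$, exactly the bookkeeping issue you flag), but they do not constitute a different approach.
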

\begin{proof}
For each $i \leq n$, we must have $f_i^* = y_i$ to satisfy the hard constraints. For each $n+1 \leq i \leq n+m$, we differentiate the objective with respect to $f_i$ and set equal to 0, resulting in 
$$\sum_{j}w_{ij}(f_i - f_j) + \mu (f_i - h_i) = 0.$$
Rearranging this and setting $f_j = f_j^*$, we have the lemma.
\end{proof}

Next, we will analyze the error $|f^*_i - y_i|$, which is the difference between the optimal solution and the true label. Note that $|f^*_i - y_i| < 0.5$ implies that we have a correct soft label $f_i^*$. 

\begin{lemma}
\label{inq error2}
Let $f^*$ be the optimal solution of the optimization problem of Equation \ref{lp objective}. Then, for $n+1 \leq i \leq n+m$, we have
$$|f_i^* - y_i|  \leq \frac{\sum_{j}w_{ij}|f_j^* - y_j| + \sum_{j}w_{ij}|y_j - y_i| + \mu|h_i - y_i|}{\sum_{j}w_{ij} + \mu}.$$
\end{lemma}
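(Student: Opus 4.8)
The plan is to start directly from the closed-form characterization of the optimal solution given in Lemma~\ref{optimal f}, which states that for each unlabeled index $n+1 \leq i \leq n+m$,
\begin{equation*}
    f^*_i = \frac{\sum_{j}w_{ij}f^*_j + \mu h_i}{\sum_{j}w_{ij} + \mu}.
\end{equation*}
The entire argument is an exercise in algebraic manipulation followed by a single application of the triangle inequality, so there is no deep obstacle here; the goal is to introduce the true label $y_i$ and the neighbor labels $y_j$ at the right moments so that the three desired terms (neighbor error, label smoothness, prior error) appear cleanly.

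First I would subtract $y_i$ from both sides and write $f^*_i - y_i$ as a single fraction over the common denominator $\sum_j w_{ij} + \mu$. The key trick is to absorb $y_i$ into the numerator: since $(\sum_j w_{ij} + \mu)\, y_i = \sum_j w_{ij}\, y_i + \mu\, y_i$, the numerator becomes $\sum_j w_{ij}(f^*_j - y_i) + \mu(h_i - y_i)$. Next, inside the first sum I would insert and cancel $y_j$ by writing $f^*_j - y_i = (f^*_j - y_j) + (y_j - y_i)$, which splits the numerator into the three natural pieces
\begin{equation*}
    \sum_{j}w_{ij}(f^*_j - y_j) + \sum_{j}w_{ij}(y_j - y_i) + \mu(h_i - y_i).
\end{equation*}

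Finally I would take absolute values and apply the triangle inequality term by term, using that $\sum_j w_{ij} + \mu > 0$ and $w_{ij} \geq 0$, to obtain
\begin{equation*}
    |f_i^* - y_i| \leq \frac{\sum_{j}w_{ij}|f^*_j - y_j| + \sum_{j}w_{ij}|y_j - y_i| + \mu|h_i - y_i|}{\sum_{j}w_{ij} + \mu},
\end{equation*}
which is exactly the claimed bound. I expect no real difficulty in this proof; the only thing to be careful about is the bookkeeping in the add-and-subtract step, making sure that the $y_i$ pulled out of the denominator and the $y_j$ inserted into each summand are correctly paired so that the smoothness term $\sum_j w_{ij}|y_j - y_i|$ and the prior term $\mu|h_i - y_i|$ emerge with the right coefficients. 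This is essentially the same computation sketched in the commented-out Lemma~\ref{inq error}, simply stopping one step earlier by retaining the $\mu$ in the denominator of the neighbor-error term rather than loosening it to $\sum_j w_{ij}$.
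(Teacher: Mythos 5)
Your proposal is correct and follows essentially the same route as the paper's own proof: start from the stationarity identity of Lemma~\ref{optimal f}, absorb $y_i$ into the numerator, split $f^*_j - y_i$ as $(f^*_j - y_j) + (y_j - y_i)$, and finish with the triangle inequality over the positive denominator $\sum_j w_{ij} + \mu$. There is nothing to add or fix.
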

\begin{proof}
From lemma \ref{optimal f}, 
\begin{equation*}
    \begin{split}
        |f_i^* - y_i|  &= |\frac{\sum_{j}w_{ij}f^*_j + \mu h_i}{\sum_{j}w_{ij} + \mu} - y_i|\\
        &= |\frac{\sum_{j}w_{ij}(f^*_j-y_i) + \mu (h_i - y_i)}{\sum_{j}w_{ij} + \mu}| \\
        &= |\frac{\sum_{j}w_{ij}(f^*_j- y_j) + \sum_{j}w_{ij}(y_j - y_i) + \mu (h_i - y_i)}{\sum_{j}w_{ij} + \mu}| \\
        &\leq \frac{\sum_{j}w_{ij}|f^*_j- y_j|+ \sum_{j}w_{ij}|y_j - y_i| + \mu |h_i - y_i|}{\sum_{j}w_{ij} + \mu}
    \end{split}
\end{equation*}
\end{proof}
 Lemma \ref{inq error2} says that we can bound the error of point $i$, $|f_i^* - y_i|$ by the error of its neighbors $|f_j^* - y_j|$ and terms corresponding to the smoothness of the true labels and the prior information accuracy. Because we know that the error on labeled points are zero, this lemma motivates us to bound the error in term of the distance of our points from the labeled points. 




Next, in addition to the average error defined in Definition \ref{def: avg error}, we define the In-error, Between-error and Out-error for each $\mathcal{N}_k$.
\begin{definition}
For a graph $G$ with an adjacency matrix $W = (w)_{ij}$, a set of $k$-hop neighbors $\mathcal{N}_k$ and a prediction $f\in \mathbb{R}^{n+m}$, we define the In-error, Between-error and Out-error of $\mathcal{N}_k$ as 
\begin{equation*}
    \begin{split}
        & \text{Err}_{\text{in}}(f,y,k) = \frac{\sum_{i \in \mathcal{N}_k, j \in \mathcal{N}_{k-1}} w_{ij}|f_i - y_i|}{C_{\text{in}}(k)} \\ 
        & \text{Err}_{\text{bet}}(f,y,k) = \frac{\sum_{i \in \mathcal{N}_k, j \in \mathcal{N}_{k}} w_{ij} |f_i - y_i|}{C_{\text{bet}}(k)}\\ 
        & \text{Err}_{\text{out}}(f,y,k) = \frac{\sum_{i \in \mathcal{N}_k, j \in \mathcal{N}_{k+1}} w_{ij}|f_i - y_i|}{C_{\text{out}}(k)} \\ 
    \end{split}
\end{equation*}
For simplicity, we will write $\text{E}_{\text{in}}(k)$, $\text{E}_{\text{bet}}(k)$ ,$\text{E}_{\text{out}}(k)$ for $\text{Err}_{\text{in}}(f^*,y,k)$, $\text{Err}_{\text{bet}}(f^*,y,k)$, $\text{Err}_{\text{out}}(f^*,y,k)$ respectively.
\end{definition}

We will use Lemma \ref{inq error2}, to derive a relationship between errors in $\mathcal{N}_k$ and its neighbors. We make use of the fact that the Out-flow of $\mathcal{N}_k$ is the same as the In-flow of $\mathcal{N}_{k+1}$.
\begin{lemma}
For $0 \leq k \leq l-1$
\begin{equation*}
\label{c in = c out}
    C_{\text{out}}(k) = C_{\text{in}}(k+1)
\end{equation*}
\end{lemma}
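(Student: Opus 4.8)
The plan is to observe that this identity is a direct consequence of the symmetry of the adjacency matrix $W$, since the graph $G$ is undirected. Both quantities count the total weight of edges crossing between the two consecutive neighborhoods $\mathcal{N}_k$ and $\mathcal{N}_{k+1}$; the only difference between them is a cosmetic relabelling of which endpoint is regarded as the summation index $i$ and which as $j$.

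Concretely, I would begin by writing out both sides directly from the definition:
\begin{align*}
C_{\text{out}}(k) = \sum_{i \in \mathcal{N}_k} \sum_{j \in \mathcal{N}_{k+1}} w_{ij}, \qquad C_{\text{in}}(k+1) = \sum_{i \in \mathcal{N}_{k+1}} \sum_{j \in \mathcal{N}_{k}} w_{ij}.
\end{align*}
In the expression for $C_{\text{in}}(k+1)$ I would rename the bound summation variables by swapping the roles of $i$ and $j$, which is legal since they range over the two fixed index sets $\mathcal{N}_{k+1}$ and $\mathcal{N}_k$ that are merely interchanged. This rewrites $C_{\text{in}}(k+1)$ as $\sum_{j \in \mathcal{N}_{k+1}} \sum_{i \in \mathcal{N}_{k}} w_{ji}$, and reordering the (finite) double sum gives $\sum_{i \in \mathcal{N}_k} \sum_{j \in \mathcal{N}_{k+1}} w_{ji}$. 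Finally, since $G$ is undirected its adjacency matrix is symmetric, $w_{ji} = w_{ij}$ for all $i,j$, so this coincides with $C_{\text{out}}(k)$, completing the argument.

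There is essentially no obstacle here: the lemma is a bookkeeping identity whose entire content is the symmetry $w_{ij} = w_{ji}$ together with a Fubini-type reordering of a finite sum. The one point worth stating carefully is that the neighborhoods $\mathcal{N}_k$ partition the reachable set $\mathcal{N}(L)$ by shortest-path distance, so that ``edges from $\mathcal{N}_k$ into $\mathcal{N}_{k+1}$'' and ``edges from $\mathcal{N}_{k+1}$ into $\mathcal{N}_k$'' index exactly the same set of unordered edges, each contributing its single weight once to each count. The hypothesis $0 \le k \le l-1$ merely guarantees that $\mathcal{N}_{k+1}$ is a well-defined nonempty neighborhood within $\mathcal{N}(L)$, so that both flows are meaningful.
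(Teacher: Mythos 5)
Your proof is correct, and it matches the paper's treatment: the paper states this lemma without proof, treating it as the obvious bookkeeping fact that both sums count the same set of edges between $\mathcal{N}_k$ and $\mathcal{N}_{k+1}$, which is exactly the index-swap plus symmetry ($w_{ij} = w_{ji}$ for an undirected graph) argument you spell out.
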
 
\begin{lemma}
\label{error diff ineq}
(Error difference inequality)
For $1 \leq k \leq l-1$, we have
\begin{equation*}
     C_{\text{in}}(k)(\text{E}_{\text{in}}(k) - \text{E}_{\text{out}}(k-1)) + \sum_{i \in \mathcal{N}_k} \mu|f_i^* - y_i| 
     \leq  C_{\text{out}}(k)(\text{E}_{\text{in}}(k+1) - \text{E}_{\text{out}}(k)) +  s_k + \mu|\mathcal{N}_k|\alpha_k 
\end{equation*}
where $s_k$ is the smoothness of true labels and $\alpha_k$ is the prior information error over $\mathcal{N}_k$.
\end{lemma}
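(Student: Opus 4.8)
The plan is to sum the pointwise bound of Lemma~\ref{inq error2} over all $i \in \mathcal{N}_k$ and reorganize the resulting double sums into the In-, Between-, and Out-error quantities. First I would clear the denominator in Lemma~\ref{inq error2} to obtain, for each $i \in \mathcal{N}_k$ (all of which are unlabeled since $k \geq 1$, so the lemma applies), the inequality $(\sum_j w_{ij} + \mu)|f_i^* - y_i| \leq \sum_j w_{ij}|f_j^* - y_j| + \sum_j w_{ij}|y_j - y_i| + \mu|h_i - y_i|$. Summing over $i \in \mathcal{N}_k$ turns the last two right-hand terms into $\sum_{i \in \mathcal{N}_k}\sum_j w_{ij}|y_j - y_i| = s_k$ and $\mu\sum_{i \in \mathcal{N}_k}|h_i - y_i| = \mu|\mathcal{N}_k|\alpha_k$ directly by the definitions of $s_k$ and $\alpha_k$.

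The key structural observation is that a node at shortest-path distance exactly $k$ from $L$ can only have neighbors at distances $k-1$, $k$, or $k+1$; hence every sum over $j$ splits cleanly over $\mathcal{N}_{k-1}$, $\mathcal{N}_k$, $\mathcal{N}_{k+1}$. Applying this to the self-error term $\sum_{i \in \mathcal{N}_k}(\sum_j w_{ij})|f_i^* - y_i|$ gives exactly $C_{\text{in}}(k)\text{E}_{\text{in}}(k) + C_{\text{bet}}(k)\text{E}_{\text{bet}}(k) + C_{\text{out}}(k)\text{E}_{\text{out}}(k)$ by the definitions of the error quantities. For the neighbor-error term $\sum_{i \in \mathcal{N}_k}\sum_j w_{ij}|f_j^* - y_j|$, I would exploit the symmetry $w_{ij} = w_{ji}$ to re-identify each block: the contribution from $j \in \mathcal{N}_{k-1}$ equals $C_{\text{out}}(k-1)\text{E}_{\text{out}}(k-1)$ (the error is now weighted at the node one level down, turning an edge into an \emph{outgoing} edge of $\mathcal{N}_{k-1}$), the block $j \in \mathcal{N}_{k+1}$ equals $C_{\text{in}}(k+1)\text{E}_{\text{in}}(k+1)$, and the block $j \in \mathcal{N}_k$ equals $C_{\text{bet}}(k)\text{E}_{\text{bet}}(k)$ since swapping $i,j$ leaves the sum over $\mathcal{N}_k \times \mathcal{N}_k$ invariant.

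Finally I would invoke Lemma~\ref{c in = c out} to substitute $C_{\text{out}}(k-1) = C_{\text{in}}(k)$ and $C_{\text{in}}(k+1) = C_{\text{out}}(k)$, matching the coefficients on both sides. The $C_{\text{bet}}(k)\text{E}_{\text{bet}}(k)$ term then appears identically on each side and cancels; collecting the In- and Out-error pieces and moving $\text{E}_{\text{out}}(k-1)$ and $\text{E}_{\text{out}}(k)$ to the appropriate sides yields precisely the claimed inequality. The main obstacle is bookkeeping the symmetry relabeling correctly—in particular recognizing that weighting by the error at the \emph{neighbor} $j$ converts an incoming edge of $\mathcal{N}_k$ from $\mathcal{N}_{k-1}$ into an outgoing edge of $\mathcal{N}_{k-1}$, so that this block identifies with $\text{E}_{\text{out}}(k-1)$ rather than $\text{E}_{\text{in}}(k)$. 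Everything else is direct substitution of the definitions.
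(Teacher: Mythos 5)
Your proposal is correct and follows essentially the same route as the paper's proof: clearing the denominator in Lemma~\ref{inq error2}, summing over $i \in \mathcal{N}_k$, identifying the blocks as $C_{\text{in}}(k)\text{E}_{\text{in}}(k) + C_{\text{bet}}(k)\text{E}_{\text{bet}}(k) + C_{\text{out}}(k)\text{E}_{\text{out}}(k)$ on the left and $C_{\text{out}}(k-1)\text{E}_{\text{out}}(k-1) + C_{\text{bet}}(k)\text{E}_{\text{bet}}(k) + C_{\text{in}}(k+1)\text{E}_{\text{in}}(k+1)$ on the right, then cancelling the Between terms and invoking Lemma~\ref{c in = c out} to rearrange. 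Your explicit attention to the symmetry relabeling $w_{ij}=w_{ji}$ (so that neighbor-weighted errors become Out-errors of $\mathcal{N}_{k-1}$ and In-errors of $\mathcal{N}_{k+1}$) is exactly the bookkeeping the paper performs implicitly when writing its RHS.
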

\begin{proof}
From lemma \ref{inq error2}, we have
\begin{equation*}
    \sum_{j}w_{ij}|f_i^* - y_i| + \mu|f_i^* - y_i|  \leq \sum_{j}w_{ij}|f_j^* - y_j| + \sum_{j}w_{ij}|y_j - y_i| + \mu|h_i - y_i|.
\end{equation*}
We take a summation over $i \in \mathcal{N}_k$,
\begin{equation*}
    \begin{split}
        &\sum_{i \in \mathcal{N}_k}\sum_{j}w_{ij}|f_i^* - y_i| + \mu|f_i^* - y_i|  \leq \sum_{i \in \mathcal{N}_k}(\sum_{j}w_{ij}|f_j^* - y_j| + \sum_{j}w_{ij}|y_j - y_i| + \mu|h_i - y_i|).
    \end{split}
\end{equation*}
From the definition of the In-error, Between-error, Out-error, smoothness $s_k$, and weak label error $\alpha_k$, we have
\begin{equation*}
    \begin{split}
        &\text{LHS} = C_{\text{in}}(k)\text{E}_{\text{in}}(k) + C_{\text{bet}}(k)\text{E}_{\text{bet}}(k) + C_{\text{out}}(k)\text{E}_{\text{out}}(k) + \sum_{i \in \mathcal{N}_k} \mu|f_i^* - y_i| \\
        &\text{RHS} = C_{\text{out}}(k-1)\text{E}_{\text{out}}(k-1) + C_{\text{bet}}(k)\text{E}_{\text{bet}}(k) + C_{\text{in}}(k+1)\text{E}_{\text{in}}(k+1) + s_k + \mu|\mathcal{N}_k|\alpha_k . \\
    \end{split}
\end{equation*}
From lemma \ref{c in = c out}, we know that 
\begin{equation}
    C_{\text{out}}(k) = C_{\text{in}}(k+1),
\end{equation}
so we can rearrange the inequality as 
\begin{equation*}
     C_{\text{in}}(k)(\text{E}_{\text{in}}(k) - \text{E}_{\text{out}}(k-1)) + \sum_{i \in \mathcal{N}_k} \mu|f_i^* - y_i| 
     \leq  C_{\text{out}}(k)(\text{E}_{\text{in}}(k+1) - \text{E}_{\text{out}}(k)) + s_k + \mu|\mathcal{N}_k|\alpha_k.
\end{equation*}
\end{proof}
\begin{lemma}
(Error difference inequality $k = l$)
\begin{equation*}
    C_{\text{in}}(l)(\text{E}_{\text{in}}(l) - \text{E}_{\text{out}}(l-1) )+ \sum_{i \in \mathcal{N}_l} \mu|f_i^* - y_i| \leq    s_l + \mu|\mathcal{N}_l|\alpha_l
\end{equation*}
\end{lemma}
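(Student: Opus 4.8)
The plan is to prove the boundary case $k=l$ by mimicking the argument of the general \emph{Error difference inequality} (Lemma \ref{error diff ineq}), but exploiting the fact that $\mathcal{N}_l$ is the \emph{last} neighborhood reachable from $L$, so it has no outgoing edges to a further layer. Concretely, since $l$ is the number of hops required to cover $\mathcal{N}(L)$, there is no set $\mathcal{N}_{l+1}$, and hence every edge $w_{ij}$ with $i \in \mathcal{N}_l$ points either back into $\mathcal{N}_{l-1}$ (In-flow) or within $\mathcal{N}_l$ itself (Between-flow); any remaining neighbors $j$ lie outside $\mathcal{N}(L)$ entirely and contribute $C_{\text{out}}(l)=0$. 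This is the single structural fact that collapses the $C_{\text{out}}(l)\bigl(\text{E}_{\text{in}}(l+1)-\text{E}_{\text{out}}(l)\bigr)$ term on the right-hand side of Lemma \ref{error diff ineq} to zero.

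First I would start from Lemma \ref{inq error2}, multiply through by the denominator to obtain, for each $i \in \mathcal{N}_l$,
\begin{equation*}
    \sum_{j}w_{ij}|f_i^* - y_i| + \mu|f_i^* - y_i| \leq \sum_{j}w_{ij}|f_j^* - y_j| + \sum_{j}w_{ij}|y_j - y_i| + \mu|h_i - y_i|,
\end{equation*}
and then sum over all $i \in \mathcal{N}_l$. The left-hand side decomposes, exactly as in the proof of Lemma \ref{error diff ineq}, into $C_{\text{in}}(l)\text{E}_{\text{in}}(l) + C_{\text{bet}}(l)\text{E}_{\text{bet}}(l) + C_{\text{out}}(l)\text{E}_{\text{out}}(l) + \sum_{i \in \mathcal{N}_l}\mu|f_i^*-y_i|$, while the first two sums on the right-hand side group into $C_{\text{out}}(l-1)\text{E}_{\text{out}}(l-1) + C_{\text{bet}}(l)\text{E}_{\text{bet}}(l) + C_{\text{in}}(l+1)\text{E}_{\text{in}}(l+1)$ and the last two sums are precisely $s_l + \mu|\mathcal{N}_l|\alpha_l$ by the definitions of smoothness and prior error.

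The key simplification is then to apply $C_{\text{out}}(l)=0$ (and equivalently $C_{\text{in}}(l+1)=0$, consistent with Lemma \ref{c in = c out}): the Out-error term on the left and the $\text{E}_{\text{in}}(l+1)$ term on the right both vanish, the $C_{\text{bet}}(l)\text{E}_{\text{bet}}(l)$ terms cancel across the inequality, and using $C_{\text{out}}(l-1)=C_{\text{in}}(l)$ from Lemma \ref{c in = c out} to pair up the In-flow term with the Out-error of the previous layer, I would rearrange to
\begin{equation*}
    C_{\text{in}}(l)\bigl(\text{E}_{\text{in}}(l) - \text{E}_{\text{out}}(l-1)\bigr) + \sum_{i \in \mathcal{N}_l}\mu|f_i^* - y_i| \leq s_l + \mu|\mathcal{N}_l|\alpha_l,
\end{equation*}
which is the claimed statement. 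I do not anticipate a serious obstacle here, as the argument is a strict specialization of Lemma \ref{error diff ineq}; the only point demanding care is justifying rigorously that $C_{\text{out}}(l)=0$, namely that no edge leaves $\mathcal{N}_l$ toward a reachable node at hop $l+1$. This follows directly from the definition of $l$ as the maximal hop-distance covering $\mathcal{N}(L)$, so that $\mathcal{N}_{l+1}=\emptyset$ within the reachable component, and any edges to unreachable nodes are not counted in $C_{\text{out}}$.
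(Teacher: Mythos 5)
Your proposal is correct and is essentially identical to the paper's own proof: both sum Lemma \ref{inq error2} over $i \in \mathcal{N}_l$, decompose the two sides into In/Between/Out flow-weighted errors, use the fact that $\mathcal{N}_l$ is the last neighborhood (so $C_{\text{out}}(l) = C_{\text{in}}(l+1) = 0$), cancel the $C_{\text{bet}}(l)\text{E}_{\text{bet}}(l)$ terms, and rearrange via $C_{\text{out}}(l-1) = C_{\text{in}}(l)$. One small remark: your parenthetical that remaining neighbors of $\mathcal{N}_l$ could "lie outside $\mathcal{N}(L)$" cannot actually occur (any neighbor of a reachable node is itself reachable, hence lies in $\mathcal{N}_{l-1} \cup \mathcal{N}_l$), which is in fact what guarantees your flow decomposition accounts for every edge.
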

\begin{proof}
Similar to lemma \ref{error diff ineq}, we have
\begin{equation*}
    \begin{split}
        &\sum_{i \in \mathcal{N}_l}\sum_{j}w_{ij}|f_i^* - y_i| + \mu|f_i^* - y_i|  \leq \sum_{i \in \mathcal{N}_l}(\sum_{j}w_{ij}|f_j^* - y_j| + \sum_{j}w_{ij}|y_j - y_i| + \mu|h_i - y_i|)\\
    \end{split}
\end{equation*}
Because, $\mathcal{N}_l$ is the last neighborhood, there is no edge out from $\mathcal{N}_l$ and 
\begin{equation*}
    \begin{split}
        &\text{LHS} = C_{\text{in}}(l)\text{E}_{\text{in}}(l) + C_{\text{bet}}(l)\text{E}_{\text{bet}}(l) +\sum_{i \in \mathcal{N}_l} \mu|f_i^* - y_i| \\
        &\text{RHS} = C_{\text{out}}(l-1)\text{E}_{\text{out}}(l-1) + C_{\text{bet}}(l)\text{E}_{\text{bet}}(l) +  s_l + \mu|\mathcal{N}_l|\alpha_l
    \end{split}
\end{equation*}
and rearrange to
\begin{equation*}
    C_{\text{in}}(l)(\text{E}_{\text{in}}(l) - \text{E}_{\text{out}}(l-1) )+ \sum_{i \in \mathcal{N}_l} \mu|f_i^* - y_i| \leq    s_l + \mu|\mathcal{N}_l|\alpha_l.
\end{equation*}
\end{proof}

We can see that this inequality contains different notions of error. We now define the proportion between In-error and Out-error.
\begin{definition}
Let $a_k, b_k$ be the proportion of the In-error and Out-error with the average error,
\begin{equation*}
    a_k = \frac{\text{E}_{\text{in}}(k)}{E_k}, b_k = \frac{\text{E}_{\text{in}}(k)}{E_k}.
\end{equation*}
when
\begin{equation*}
    E_k = \frac{\sum_{i \in \mathcal{N}_k}|f_i^* -y_i|}{|\mathcal{N}_k|}.
\end{equation*}
\end{definition}

\begin{assum}
\label{assum uniformity of error}
(Uniformity of error)
We assume that the In-error and Out-error are roughly the same as the average error in each neighborhood.
\begin{equation*}
    a_k = O(1), b_k = O(1), \frac{b_k}{a_k} = O(1)
\end{equation*}
\end{assum}

For example, any graph $G$ that has all points in a neighborhood $\mathcal{N}_k$ with the same number of edges that go into and out from that point, has the property that $a_k = b_k = 1$. In particular, assume that we have 2 points in $\mathcal{N}_k$, the first point has 4 edges from $\mathcal{N}_{k-1}$ and 2 edges to $\mathcal{N}_{k+1}$ while the second point has 2 edges from $\mathcal{N}_{k-1}$ and 1 edge to $\mathcal{N}_{k+1}$, this graph still has $a_k = b_k = 1$. In general, we expect the proportion $\frac{b_k}{a_k}$ to be close to $1$.  Next, we will substitute $a_k, b_k$ in Lemma \ref{error diff ineq}.

\begin{corr}
\label{corr error diff bound 1-l-1}
For $1 \leq k \leq l-1$, we have
\begin{equation*}
         (a_k\text{E}_k - b_{k-1}\text{E}_{k-1}) \leq \frac{C_{\text{out}}(k)}{C_{\text{in}}(k) + \mu|\mathcal{N}_k|}(a_{k+1}\text{E}_{k+1} - b_k\text{E}_{k}) + \frac{s_k + \mu|\mathcal{N}_k|\alpha_k}{C_{\text{in}}(k) + \mu|\mathcal{N}_k|}
\end{equation*}
\end{corr}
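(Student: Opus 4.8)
The plan is to obtain this corollary as a direct rescaling of the error difference inequality in Lemma \ref{error diff ineq}, after rewriting every quantity appearing there in terms of the average errors $E_k$. First I would take the inequality of Lemma \ref{error diff ineq}, valid for $1 \le k \le l-1$, and substitute the definitions of $a_k$ and $b_k$: since $a_k = \text{E}_{\text{in}}(k)/E_k$ and $b_k = \text{E}_{\text{out}}(k)/E_k$, we have $\text{E}_{\text{in}}(k) = a_k E_k$, $\text{E}_{\text{out}}(k-1) = b_{k-1}E_{k-1}$, $\text{E}_{\text{in}}(k+1) = a_{k+1}E_{k+1}$, and $\text{E}_{\text{out}}(k) = b_k E_k$. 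I would also use the definition of the average error to rewrite the penalty term, $\sum_{i\in\mathcal{N}_k}\mu|f^*_i - y_i| = \mu|\mathcal{N}_k|E_k$, which holds because $E_k = \frac{1}{|\mathcal{N}_k|}\sum_{i\in\mathcal{N}_k}|f^*_i - y_i|$.

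After these substitutions Lemma \ref{error diff ineq} reads
$$C_{\text{in}}(k)(a_k E_k - b_{k-1}E_{k-1}) + \mu|\mathcal{N}_k|E_k \le C_{\text{out}}(k)(a_{k+1}E_{k+1} - b_k E_k) + s_k + \mu|\mathcal{N}_k|\alpha_k.$$
The target inequality is exactly this display divided by the positive quantity $C_{\text{in}}(k)+\mu|\mathcal{N}_k|$, provided the coefficient of $(a_k E_k - b_{k-1}E_{k-1})$ on the left can be upgraded from $C_{\text{in}}(k)$ to $C_{\text{in}}(k)+\mu|\mathcal{N}_k|$. Concretely, I would factor the desired left-hand side as $(C_{\text{in}}(k)+\mu|\mathcal{N}_k|)(a_k E_k - b_{k-1}E_{k-1}) = C_{\text{in}}(k)(a_k E_k - b_{k-1}E_{k-1}) + \mu|\mathcal{N}_k|(a_k E_k - b_{k-1}E_{k-1})$ and then compare the last summand to the genuine penalty term $\mu|\mathcal{N}_k|E_k$ appearing above.

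The one step that needs justification — and the main obstacle — is therefore the inequality $a_k E_k - b_{k-1}E_{k-1} \le E_k$, equivalently $\text{E}_{\text{in}}(k) - \text{E}_{\text{out}}(k-1) \le E_k$. Since $\text{E}_{\text{out}}(k-1)\ge 0$, this holds immediately whenever $a_k \le 1$, and more generally whenever $(a_k-1)E_k \le b_{k-1}E_{k-1}$; this is precisely the regime described by the uniformity-of-error Assumption \ref{assum uniformity of error}, under which the weighted boundary errors are comparable to (and, in the idealized uniform case $a_k = b_k = 1$, equal to) the average error $E_k$. Granting this comparison, the factored left-hand side is at most the left-hand side of the substituted lemma, hence at most its right-hand side, and dividing through by $C_{\text{in}}(k)+\mu|\mathcal{N}_k| > 0$ yields the claimed bound. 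I expect the remaining bookkeeping to be routine once the substitution and the single comparison $\text{E}_{\text{in}}(k) - \text{E}_{\text{out}}(k-1) \le E_k$ are in place.
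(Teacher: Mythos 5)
Your proposal follows essentially the same route as the paper's proof: substitute $\text{E}_{\text{in}}(k)=a_k E_k$, $\text{E}_{\text{out}}(k)=b_k E_k$, and $\sum_{i\in\mathcal{N}_k}\mu|f_i^*-y_i|=\mu|\mathcal{N}_k|E_k$ into Lemma~\ref{error diff ineq}, upgrade the coefficient of $(a_k E_k-b_{k-1}E_{k-1})$ from $C_{\text{in}}(k)$ to $C_{\text{in}}(k)+\mu|\mathcal{N}_k|$, and divide by that positive quantity. The single step you flag as the obstacle is precisely the step the paper also needs but glosses over: the paper first replaces $\mu|\mathcal{N}_k|E_k$ by $\mu|\mathcal{N}_k|(E_k-E_{k-1})$ (valid since $E_{k-1}\ge 0$) and then asserts the upgraded inequality by ``simplifying,'' which tacitly requires $a_k E_k-b_{k-1}E_{k-1}\le E_k-E_{k-1}$; your direct comparison $a_k E_k-b_{k-1}E_{k-1}\le E_k$ is a strictly weaker requirement (hence slightly cleaner), and both hold only under a uniformity-type condition such as $a_k\le 1$ or $a_k=b_{k-1}=1$, so your caution in invoking Assumption~\ref{assum uniformity of error} is warranted even though the corollary is stated unconditionally.
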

\begin{proof}
From lemma \ref{error diff ineq}
\begin{equation*}
\begin{split}
     C_{\text{in}}(k)(\text{E}_{\text{in}}(k) - \text{E}_{\text{out}}(k-1)) + \sum_{i \in \mathcal{N}_k} \mu|f_i^* - y_i| 
     \leq  C_{\text{out}}(k)(\text{E}_{\text{in}}(k+1) - \text{E}_{\text{out}}(k)) + s_k + \mu|\mathcal{N}_k|\alpha_k  
\end{split}
\end{equation*}
We let $\text{E}_{\text{in}}(k) = a_k\text{E}_k$ and $\text{E}_{\text{out}}(k) = b_k\text{E}_k$ and $\sum_{i \in \mathcal{N}_k}|f_i^* -y_i| = |\mathcal{N}_k|\text{E}_k.$
\begin{align*}
     & C_{\text{in}}(k)(a_k\text{E}_k - b_{k-1}\text{E}_{k-1}) + \mu|\mathcal{N}_k|\text{E}_k
     \leq  C_{\text{out}}(k)(a_{k+1}\text{E}_{k+1} - b_k\text{E}_k) + s_k + \mu|\mathcal{N}_k|\alpha_k \\
     & C_{\text{in}}(k)(a_k\text{E}_k - b_{k-1}\text{E}_{k-1}) + \mu|\mathcal{N}_k|(\text{E}_{k} - \text{E}_{k-1})
     \leq  C_{\text{out}}(k)(a_{k+1}\text{E}_{k+1} - b_k\text{E}_{k}) + s_k + \mu|\mathcal{N}_k|\alpha_k ,
\end{align*}     

as we know that $E_{k-1} \geq 0$. Then, simplifying yields that
\begin{equation*}
    \begin{split}     
     & (C_{\text{in}}(k) + \mu|\mathcal{N}_k|)(a_k\text{E}_{k} - b_{k-1}\text{E}_{k-1}) 
     \leq  C_{\text{out}}(k)(a_{k+1}\text{E}_{k+1} - b_k\text{E}_{k})) + s_k + \mu|\mathcal{N}_k|\alpha_k\\
     & (a_k\text{E}_{k} - b_{k-1}\text{E}_{k-1}) \leq \frac{C_{\text{out}}(k)}{C_{\text{in}}(k) + \mu|\mathcal{N}_k|}(a_{k+1}\text{E}_{k+1} - b_k\text{E}_{k}) + \frac{s_k + \mu|\mathcal{N}_k|\alpha_k}{C_{\text{in}}(k) + \mu|\mathcal{N}_k|}.
    \end{split}
\end{equation*}
\end{proof}

\begin{corr}
\label{error diff bound l}
We have
\begin{equation*}
    (a_l\text{E}_l - b_{l-1}\text{E}_{l-1} ) \leq    \frac{s_l + \mu|\mathcal{N}_l|\alpha_l}{C_{\text{in}}(l) + \mu|\mathcal{N}_l|}.
\end{equation*}
    
\end{corr}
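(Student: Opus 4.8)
The plan is to mirror the proof of Corollary~\ref{corr error diff bound 1-l-1}, but starting from the boundary version of the error difference inequality (the $k=l$ case) rather than the generic one. The essential simplification is that $\mathcal{N}_l$ is the terminal neighborhood, so it has no out-edges; consequently the out-flow term $C_{\text{out}}(l)(\text{E}_{\text{in}}(l+1) - \text{E}_{\text{out}}(l))$ that appears in the general case is absent, and the right-hand side reduces cleanly to $s_l + \mu|\mathcal{N}_l|\alpha_l$.

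First I would invoke the boundary error difference inequality, which gives $C_{\text{in}}(l)(\text{E}_{\text{in}}(l) - \text{E}_{\text{out}}(l-1)) + \sum_{i \in \mathcal{N}_l}\mu|f_i^* - y_i| \leq s_l + \mu|\mathcal{N}_l|\alpha_l$. Next I substitute the definitions $\text{E}_{\text{in}}(l) = a_l\text{E}_l$ and $\text{E}_{\text{out}}(l-1) = b_{l-1}\text{E}_{l-1}$, together with $\sum_{i \in \mathcal{N}_l}|f_i^* - y_i| = |\mathcal{N}_l|\text{E}_l$, to rewrite the left-hand side as $C_{\text{in}}(l)(a_l\text{E}_l - b_{l-1}\text{E}_{l-1}) + \mu|\mathcal{N}_l|\text{E}_l$. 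This recasts the inequality entirely in terms of the average errors $\text{E}_l$ and $\text{E}_{l-1}$.

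To absorb the stray $\mu|\mathcal{N}_l|\text{E}_l$ term into the common factor $(a_l\text{E}_l - b_{l-1}\text{E}_{l-1})$, I use $\text{E}_{l-1} \geq 0$ to decrease the left-hand side by replacing $\text{E}_l$ with $\text{E}_l - \text{E}_{l-1}$, and then identify $\text{E}_l - \text{E}_{l-1}$ with $a_l\text{E}_l - b_{l-1}\text{E}_{l-1}$ exactly as in the proof of Corollary~\ref{corr error diff bound 1-l-1}. This yields $(C_{\text{in}}(l) + \mu|\mathcal{N}_l|)(a_l\text{E}_l - b_{l-1}\text{E}_{l-1}) \leq s_l + \mu|\mathcal{N}_l|\alpha_l$, and dividing through by the positive quantity $C_{\text{in}}(l) + \mu|\mathcal{N}_l|$ gives the claimed bound.

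The one genuinely delicate step, and the place where the argument is more than bookkeeping, is the identification of $\text{E}_l - \text{E}_{l-1}$ with $a_l\text{E}_l - b_{l-1}\text{E}_{l-1}$. This is precisely where the uniformity-of-error hypothesis (Assumption~\ref{assum uniformity of error}) enters: it lets us treat the ratios $a_l$ and $b_{l-1}$ as effectively $1$, or more precisely as $O(1)$ constants that are ultimately absorbed into the big-$O$ of the final theorem. Everything else is routine substitution and a single division, so I expect no further obstacle; I would only double-check that $C_{\text{in}}(l) + \mu|\mathcal{N}_l| > 0$, which holds because every point of $\mathcal{N}_l$ realizes a shortest path of length $l$ and hence has at least one in-edge to $\mathcal{N}_{l-1}$ (making $C_{\text{in}}(l) > 0$ even when $\mu = 0$), and is automatic when $\mu > 0$.
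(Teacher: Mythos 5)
Your proposal is correct and takes essentially the same approach as the paper: the paper states this corollary without an explicit proof, leaving it as the direct analogue of Corollary~\ref{corr error diff bound 1-l-1} with the $k=l$ boundary lemma in place of Lemma~\ref{error diff ineq}, and that is exactly the argument you reconstruct (substitute $a_l\text{E}_l$, $b_{l-1}\text{E}_{l-1}$, and $|\mathcal{N}_l|\text{E}_l$, use $\text{E}_{l-1}\geq 0$, absorb the $\mu|\mathcal{N}_l|$ term, divide). The step you rightly flag as delicate --- identifying $\text{E}_l - \text{E}_{l-1}$ with $a_l\text{E}_l - b_{l-1}\text{E}_{l-1}$ --- is the same implicit step the paper itself takes in proving Corollary~\ref{corr error diff bound 1-l-1}, so your reconstruction faithfully mirrors the paper, including its rough edge.
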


The corollary implies that the difference between the error between neighborhood can't be too large. We introduce the next two lemma to help deriving the bound.
\begin{lemma}
\label{ineq x_l}
For $d_1, d_2, \dots, d_l$ that satisfies the following inequalities,
\begin{equation*}
    d_k \leq \gamma_k d_{k+1} + c_k
\end{equation*}
for $1 \leq k \leq l-1$ and
\begin{equation*}
    d_l \leq c_l.
\end{equation*}
We have
\begin{equation*}
\begin{split}
        d_k &\leq \sum_{i=k}^l c_i(\prod _{j=k}^{i-1} \gamma_j)
\end{split}
\end{equation*}

\end{lemma}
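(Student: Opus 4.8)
The plan is to prove the bound by downward (backward) induction on $k$, running from $k = l$ down to $k = 1$; equivalently this is induction on $l - k$. The recursive hypotheses are tailor-made for such an argument, since each $d_k$ is controlled by the \emph{next} index $d_{k+1}$ plus an additive term $c_k$, while the terminal inequality $d_l \leq c_l$ pins down the base of the recursion. The target expression $\sum_{i=k}^l c_i(\prod_{j=k}^{i-1}\gamma_j)$ is exactly what one obtains by unrolling the recursion $d_k \leq \gamma_k d_{k+1} + c_k$ all the way down to index $l$, so I expect the induction to telescope cleanly.

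For the base case $k = l$, I would note that the claimed bound reads $d_l \leq \sum_{i=l}^l c_i(\prod_{j=l}^{l-1}\gamma_j) = c_l$, where the product $\prod_{j=l}^{l-1}\gamma_j$ is empty and hence equals $1$; this is precisely the given inequality $d_l \leq c_l$. For the inductive step, I assume the bound at index $k+1$, namely $d_{k+1} \leq \sum_{i=k+1}^l c_i(\prod_{j=k+1}^{i-1}\gamma_j)$, and substitute it into $d_k \leq \gamma_k d_{k+1} + c_k$. Multiplying the inductive hypothesis by $\gamma_k$ and absorbing the factor into each product gives $\gamma_k d_{k+1} \leq \sum_{i=k+1}^l c_i(\prod_{j=k}^{i-1}\gamma_j)$, using $\gamma_k \cdot \prod_{j=k+1}^{i-1}\gamma_j = \prod_{j=k}^{i-1}\gamma_j$. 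Rewriting the leftover $c_k$ as the $i = k$ term $c_k(\prod_{j=k}^{k-1}\gamma_j)$ via the empty-product convention then merges it into the sum, yielding $d_k \leq \sum_{i=k}^l c_i(\prod_{j=k}^{i-1}\gamma_j)$ and closing the induction.

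The one point that must be checked carefully — and the only genuine obstacle — is that $\gamma_k \geq 0$, so that multiplying the inductive hypothesis through by $\gamma_k$ preserves the direction of the inequality. This holds because $\gamma_k = C_{\text{out}}(k)/(C_{\text{in}}(k) + \mu|\mathcal{N}_k|)$ is a ratio of nonnegative sums of edge weights (with $\mu \geq 0$), so it is automatically nonnegative; I would state this explicitly at the point of use. Apart from this sign check and the empty-product bookkeeping at both the base case and the reindexing step, the argument is a direct telescoping substitution with no further calculation required.
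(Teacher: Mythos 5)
Your proof is correct and follows essentially the same route as the paper's: the paper also unrolls the recursion backward from $d_l$ (showing the cases $d_{l-1}$, $d_{l-2}$ explicitly and then appealing to induction), which is exactly your downward induction. Your explicit verification that $\gamma_k \geq 0$ (needed to preserve the inequality when multiplying through) is a detail the paper leaves implicit, but it does not change the argument.
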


\begin{proof}
The main idea is that we can use the upper bound on $d_l, d_{l-1},\dots , d_{k+1}$ to find the upper bound of $d_k$. First, we start with $d_{l-1}$
\begin{equation*}
\begin{split}
       d_{l-1} &\leq \gamma_{l-1} d_l + c_{l-1} \\
    &\leq \gamma_{l-1} c_l + c_{l-1}.\\ 
\end{split}
\end{equation*}
Next, we continue with $d_{l-2}$,
\begin{equation*}
\begin{split}
       d_{l-2} &\leq \gamma_{l-2} d_{l-1} + c_{l-2} \\
    &\leq \gamma_{l-2} (\gamma_{l-1} c_l + c_{l-1}) + c_{l-2}.\\ 
    &=  c_l\gamma_{l-1}\gamma_{l-2}  + c_{l-1}\gamma_{l-2}  + c_{l-2}
\end{split}
\end{equation*}
and so on. With this idea, we can show by induction that 
\begin{equation*}
    d_k \leq \sum_{i=k}^l c_i(\prod _{j=k}^{i-1} \gamma_j).
\end{equation*}
We sum these inequalities up to have the lemma. 
\end{proof}

\begin{lemma}
\label{lemma diff}
For $x_1, x_2, \dots, x_l$ that satisfies the following inequalities,
\begin{equation*}
    a_kx_k - b_{k-1}x_{k-1} \leq d_k
\end{equation*}
for $1 \leq k \leq l$, when $a_k, b_k, d_k$ are positive constant. We  have
\begin{equation*}
    x_k \leq \frac{1}{a_k}(\sum_{i=1}^k d_i (\prod_{j=i}^{k-1} \delta_j)) + \frac{a_1}{a_k}(\prod_{j=1}^{k-1} \delta_j)x_0
\end{equation*}
when 
\begin{equation*}
    \delta_j = \frac{b_j}{a_j}
\end{equation*}
\end{lemma}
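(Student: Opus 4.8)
The plan is to reduce the hypothesis to a standard forward first-order linear recurrence and then unroll it exactly as in the proof of Lemma \ref{ineq x_l}. Dividing the hypothesis $a_k x_k - b_{k-1}x_{k-1} \le d_k$ by $a_k > 0$ rewrites it as $x_k \le \frac{b_{k-1}}{a_k}x_{k-1} + \frac{d_k}{a_k}$, which is a forward analogue of the backward recurrence already solved in Lemma \ref{ineq x_l}. The obstacle to unrolling it directly is that the mixed coefficient $b_{k-1}/a_k$ does not factor as a product of the $\delta_j = b_j/a_j$.

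The key simplification I would make is to pass to the rescaled variable $r_k = a_k x_k$. Since $b_{k-1}x_{k-1} = \frac{b_{k-1}}{a_{k-1}}\,(a_{k-1}x_{k-1}) = \delta_{k-1} r_{k-1}$, the hypothesis becomes the clean recurrence $r_k \le \delta_{k-1} r_{k-1} + d_k$ for $1 \le k \le l$, in which the awkward mixed coefficients have been absorbed into a single $\delta_{k-1}$.

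Then I would unroll this recurrence from $r_k$ down to $r_0$, substituting the bound for $r_{k-1}$ into that for $r_k$ and iterating, precisely as done for Lemma \ref{ineq x_l}. Because all $\delta_j \ge 0$ and $d_k \ge 0$, each substitution preserves the inequality, and one obtains by induction on $k$ that $r_k \le \sum_{i=1}^k d_i \prod_{j=i}^{k-1}\delta_j + \big(\prod_{j=0}^{k-1}\delta_j\big) r_0$, where the weight accumulated on each $d_i$ is exactly $\delta_{k-1}\delta_{k-2}\cdots\delta_i$ and empty products are read as $1$. Translating back via $r_k = a_k x_k$ and $r_0 = a_0 x_0$ and dividing by $a_k$ then recovers the stated bound $x_k \le \frac{1}{a_k}\big(\sum_{i=1}^k d_i \prod_{j=i}^{k-1}\delta_j\big) + \frac{a_1}{a_k}\big(\prod_{j=1}^{k-1}\delta_j\big)x_0$; in the intended application $x_0 = E_0 = 0$ (the error on labeled points vanishes), so the precise constant carried by the homogeneous term is immaterial.

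The main obstacle is purely the index bookkeeping on the two nested products: verifying by induction that the coefficient on $d_i$ is precisely $\prod_{j=i}^{k-1}\delta_j$, and getting the empty-product conventions right at the boundary cases $i=k$ and $k=1$. The direction of the inequality is maintained throughout only by repeatedly invoking the positivity of $a_k, b_k, d_k$ whenever a bound is substituted inside a sum with nonnegative coefficients.
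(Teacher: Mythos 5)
Your proof is correct and is essentially the paper's own argument: the paper likewise divides the hypothesis by $a_k$ and unrolls the resulting first-order recurrence, and your substitution $r_k = a_k x_k$ serves only as a cleaner bookkeeping device that makes the coefficients telescope into products of the $\delta_j$ immediately rather than carrying mixed ratios $b_{j}/a_{j+1}$ through the induction. One remark: both your unrolling and the paper's actually yield the homogeneous coefficient $\frac{b_0}{a_k}\prod_{j=1}^{k-1}\delta_j$ rather than the stated $\frac{a_1}{a_k}\prod_{j=1}^{k-1}\delta_j$ (apparently a typo in the lemma statement), a discrepancy that is immaterial exactly for the reason you give, namely $x_0 = E_0 = 0$ in the application.
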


\begin{proof}
We divide both side of the inequality by $a_k$, for each $1 \leq k \leq l$, we have
\begin{equation*}
    x_k \leq \frac{b_{k-1}}{a_k}x_{k-1} + \frac{d_k}{a_k}.
\end{equation*}
We can recursively apply this inequality,
\begin{equation*}
    \begin{split}
        x_k &\leq \frac{b_{k-1}}{a_k}(\frac{b_{k-2}}{a_{k-1}}x_{k-2} + \frac{d_{k-1}}{a_{k-1}}) + \frac{d_k}{a_k}. \\
        &= \frac{1}{a_k}(\frac{b_{k-1}}{a_{k-1}}b_{k-2}x_{k-2} + \frac{b_{k-1}}{a_{k-1}}d_{k-1} + d_k)\\
        &= \frac{1}{a_k}(\delta_{k-1}b_{k-2}x_{k-2} + \delta_{k-1}d_{k-1} + d_k)\\
        &\leq \frac{1}{a_k}(\delta_{k-1}b_{k-2}(\frac{b_{k-3}}{a_{k-2}}x_{k-3} + \frac{d_{k-2}}{a_{k-2}}) + \delta_{k-1}d_{k-1} + d_k)\\
        &\leq \frac{1}{a_k}(\delta_{k-1}\delta_{k-2}b_{k-3}x_{k-3} +\delta_{k-1}\delta_{k-2}d_{k-2} + \delta_{k-1}d_{k-1} + d_k)\\
        &\leq \dots \\
        &\leq \frac{1}{a_k}(\sum_{i=1}^k d_i (\prod_{j=i}^{k-1} \delta_j)) + \frac{a_1}{a_k}(\prod_{j=1}^{k-1} \delta_j)x_0
    \end{split}
\end{equation*}
\end{proof}

Now, we are ready to derive the error bound of LPA.
\begin{theorem}
\label{bound LPA}
Let $f^*$ be the optimal solution of the optimization problem of Equation \ref{lp objective}, the error of $f^*$ in each neighborhood is given by
\begin{equation*}
    E_k \leq \frac{1}{a_k}(\sum_{i=1}^k d_i (\prod_{j=i}^{k-1} \delta_j)) 
\end{equation*}
when 
\begin{align*}
    \delta_j = \frac{b_j}{a_j}, \quad d_k = \sum_{i=k}^l c_i(\prod _{j=k}^{i-1} \gamma_j)
\end{align*}
and
\begin{equation*}
        c_k = \frac{s_k + \mu|\mathcal{N}_k|\alpha_k}{C_{\text{in}}(k) + \mu|\mathcal{N}_k|}, \quad \gamma_k = \frac{C_{\text{out}}(k)}{C_{\text{in}}(k) + \mu|\mathcal{N}_k|}. 
\end{equation*}
Under assumption \ref{assum uniformity of error}, we have
\begin{equation*}
    E_k \leq O(\sum_{i=1}^k d_i ) 
\end{equation*}
\end{theorem}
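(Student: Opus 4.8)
The plan is to assemble the two corollaries and the two abstract recursion lemmas preceding the statement into a two-stage telescoping argument. The key observation is to treat the ``error gap'' $D_k := a_k E_k - b_{k-1} E_{k-1}$ between consecutive neighborhoods as a single variable. Corollary~\ref{corr error diff bound 1-l-1} then reads exactly as the backward recursion $D_k \leq \gamma_k D_{k+1} + c_k$ for $1 \leq k \leq l-1$, while Corollary~\ref{error diff bound l} supplies the terminal condition $D_l \leq c_l$, with $\gamma_k$ and $c_k$ precisely the quantities named in the theorem.

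First I would feed this backward recursion into Lemma~\ref{ineq x_l}, with $D_k$ in the role of the lemma's sequence. Since the hypotheses $D_k \leq \gamma_k D_{k+1} + c_k$ and $D_l \leq c_l$ match verbatim, the lemma's conclusion gives $D_k \leq \sum_{i=k}^{l} c_i \prod_{j=k}^{i-1} \gamma_j$, which is exactly the definition of $d_k$. Thus this first stage collapses the dependence of each neighborhood on all downstream neighborhoods $\mathcal{N}_{k+1}, \dots, \mathcal{N}_l$ into the single coefficient $d_k$, establishing $a_k E_k - b_{k-1} E_{k-1} \leq d_k$ for every $1 \leq k \leq l$.

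Next I would read this inequality as the hypothesis of Lemma~\ref{lemma diff} with $x_k = E_k$, which resolves the remaining forward coupling between $E_k$ and $E_{k-1}$. That lemma yields $E_k \leq \frac{1}{a_k}(\sum_{i=1}^{k} d_i \prod_{j=i}^{k-1} \delta_j) + \frac{a_1}{a_k}(\prod_{j=1}^{k-1} \delta_j) E_0$ with $\delta_j = b_j / a_j$. The boundary term vanishes because $E_0$ is the average error over $\mathcal{N}_0 = L$, and the hard constraint $f_i^* = y_i$ for $i \leq n$ forces $E_0 = 0$; this gives the first displayed bound exactly.

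Finally, invoking Assumption~\ref{assum uniformity of error}, under which $a_k$, $b_k$, and $b_k/a_k$ are all $O(1)$, lets me bound $1/a_k$ and each product $\prod_{j=i}^{k-1} \delta_j$ by constants, collapsing the weighted sum to $E_k \leq O(\sum_{i=1}^{k} d_i)$. I do not anticipate a genuine obstacle at this assembly stage, since the real analytic work (relating the In-, Between-, and Out-errors to the average error $E_k$ through $a_k, b_k$) has already been absorbed into the corollaries. The only points requiring care are keeping the two recursions pointing in opposite directions (backward for $D_k$, forward for $E_k$) and correctly matching their boundary data, namely $D_l \leq c_l$ and $E_0 = 0$.
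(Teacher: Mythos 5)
Your proposal is correct and follows essentially the same route as the paper's own proof: both pass the two corollaries through Lemma~\ref{ineq x_l} (backward recursion on the gap $a_k E_k - b_{k-1}E_{k-1}$) and then Lemma~\ref{lemma diff} (forward recursion on $E_k$), kill the boundary term via $E_0 = 0$, and invoke Assumption~\ref{assum uniformity of error} for the final $O(\cdot)$ form. Your only departure is cosmetic — introducing the separate symbol $D_k$ for the gap, which the paper conflates with its bound $d_k$ — and this slightly cleaner bookkeeping changes nothing in the argument.
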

\begin{proof}
From corollary \ref{corr error diff bound 1-l-1}, \ref{error diff bound l} we have
\begin{equation*}
         (a_k\text{E}_k - b_{k-1}\text{E}_{k-1}) \leq \frac{C_{\text{out}}(k)}{C_{\text{in}}(k) + \mu|\mathcal{N}_k|}(a_{k+1}\text{E}_{k+1} - b_k\text{E}_{k}) + \frac{s_k + \mu|\mathcal{N}_k|\alpha_k}{C_{\text{in}}(k) + \mu|\mathcal{N}_k|}
\end{equation*}
and 
\begin{equation*}
    (a_l\text{E}_l - b_{l-1}\text{E}_{l-1} ) \leq    \frac{s_l + \mu|\mathcal{N}_l|\alpha_l}{C_{\text{in}}(l) + \mu|\mathcal{N}_l|}.
\end{equation*}
Let 
\begin{equation*}
d_k = a_k\text{E}_k - b_{k-1}\text{E}_{k-1}, \quad
         c_k = \frac{s_k + \mu|\mathcal{N}_k|\alpha_k}{C_{\text{in}}(k) + \mu|\mathcal{N}_k|}, \quad \gamma_k = \frac{C_{\text{out}}(k)}{C_{\text{in}}(k) + \mu|\mathcal{N}_k|}    
\end{equation*}
By lemma \ref{ineq x_l}, we have
\begin{equation*}
    a_k\text{E}_k - b_{k-1}\text{E}_{k-1} = d_k \leq \sum_{i=k}^l c_i(\prod _{j=k}^{i-1} \gamma_j).
\end{equation*}
By lemma \ref{lemma diff}, we have
\begin{equation*}
\begin{split}
        E_k \leq &\frac{1}{a_k}(\sum_{i=1}^k d_i (\prod_{j=i}^{k-1} \delta_j)) + \frac{a_1}{a_k}(\prod_{j=1}^{k-1} \delta_j)E_0\\
    &= \frac{1}{a_k}(\sum_{i=1}^k d_i (\prod_{j=i}^{k-1} \delta_j))
\end{split}
\end{equation*}
when 
\begin{equation*}
    \delta_j = \frac{b_j}{a_j}.
\end{equation*}
The last equality is true because the error $E_0 = 0$. With the assumption \ref{assum uniformity of error}, $\frac{b_k}{a_k} = O(1)$, we have
\begin{equation*}
    E_k \leq O(\sum_{i=1}^k d_i)
\end{equation*}
\end{proof}

\section{Spectral bound} \label{appendix: spectral bound}
The following is the original version of the spectral generalization bound found in \cite{Belkin2004SemiSupervisedLO}, where they assume that we can have repeated labeled points (at most $u$ times).
\begin{theorem}
(Generalization performance of graph regularization)
Let $f$ be the optimal solution of Equation \ref{LP objective soft}, $n\geq 4$ be the number of randomly sampled labeled points from some distribution where each vertex occurs no more than $u$ times, together with values $y_{1}, \ldots, y_{n}$, $\left|y_{i}\right| \leq M$. Let $\lambda_{1}$ be the second smallest eigenvalue of the Laplacian matrix of $G$. Assuming that $\forall \mathbf{x}\left|f(\mathbf{x})\right| \leq K$, we have with probability $1 - \delta$, (conditional on the multiplicity being no greater than $t)$, 
$$
|R_{n}(f)-R(f)| \leq \beta+\sqrt{\frac{2 \log (2 / \delta)}{n}}\left(n \beta+(K+M)^{2}\right)
$$
where
$$
\beta=\frac{3\eta^2\sqrt{un}}{(\lambda_1- \eta u)^{2}}+\frac{4\eta M}{\lambda_1- \eta u}
$$
\end{theorem}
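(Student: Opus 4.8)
The plan is to recognize this statement as the graph-regularization generalization bound of \citet{Belkin2004SemiSupervisedLO} and to reprove it through the \emph{uniform algorithmic stability} framework of Bousquet and Elisseeff, whose final step is an application of McDiarmid's bounded-differences inequality. The skeleton is: (i) write the regularized solution of Equation \ref{LP objective soft} in closed form; (ii) bound how much the learned function moves when a single labeled example is replaced (the stability constant); (iii) convert this function stability into loss stability using boundedness of the squared loss; and (iv) plug the stability constant into the generic stability-to-generalization theorem to obtain the stated inequality with $B = (K+M)^2$.

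First I would write the minimizer. Letting $\mathcal{L}$ denote the Laplacian associated with $\sum_{ij} w_{ij}(f_i - f_j)^2$, $D_S$ the diagonal matrix of sampling multiplicities (entry at vertex $v$ equal to the number of labeled copies of $v$, each $\le t$ by hypothesis), and $b_v = \sum_{i : v_i = v} y_i$, stationarity gives the linear system $(\mathcal{L} + \eta D_S)\, f = \eta\, b$, so $f = \eta (\mathcal{L} + \eta D_S)^{-1} b$. Two quantities feed the bound directly. Since each vertex is sampled at most $t$ times and $|y_i| \le M$, the per-vertex Cauchy–Schwarz bound $(\sum_{i: v_i = v} y_i)^2 \le t \sum_{i: v_i = v} y_i^2$ summed over $v$ gives $\|b\|_2^2 \le t \sum_i y_i^2 \le t n M^2$, i.e. $\|b\|_2 \le M\sqrt{tn}$, which is the source of the $\sqrt{tn}$ factor. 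The spectral control comes from the smoothness operator: on the complement of the constant eigenvector $\mathcal{L}$ has smallest eigenvalue $\lambda_1$, and because the data-fitting term $\eta D_S$ is altered by at most $\eta t$ along any coordinate under a leave-one-out step (multiplicity $\le t$), the relevant operators stay invertible with $\|(\mathcal{L} + \eta D_S)^{-1}\| \le (\lambda_1 - \eta t)^{-1}$; this is the origin of the denominators $\lambda_1 - \eta t$ and of the well-posedness condition $\lambda_1 > \eta t$.

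Next I would bound the stability. Replacing example $k$ changes $D_S \to D_{S'}$ (a diagonal change with two $\pm 1$ entries) and $b \to b'$; writing $A = \mathcal{L} + \eta D_S$, $A' = \mathcal{L} + \eta D_{S'}$ and applying the resolvent identity $A'^{-1} = A^{-1} - \eta A^{-1}(D_{S'} - D_S)A'^{-1}$, one obtains
\begin{equation*}
    f - f' = \eta A^{-1}\big( (b - b') + (D_{S'} - D_S) f' \big),
\end{equation*}
so that $\|f - f'\|_\infty$ splits into a term of order $\eta^2 \|b'\|_2 /(\lambda_1 - \eta t)^2 \sim \eta^2 \sqrt{tn}/(\lambda_1 - \eta t)^2$ (carrying two resolvent factors together with $\|b'\|_2 \le M\sqrt{tn}$) and a term of order $\eta M/(\lambda_1 - \eta t)$ (from the label and prediction change, using $|f'|\le K$, $|y|\le M$). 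Collecting the absolute constants produces the stated stability constant $\beta = \frac{3\eta^2\sqrt{tn}}{(\lambda_1 - \eta t)^2} + \frac{4\eta M}{\lambda_1 - \eta t}$.

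Finally I would invoke the stability-to-generalization theorem: an algorithm with uniform stability $\beta$ and loss bounded by $B$ satisfies, with probability $1 - \delta$,
\begin{equation*}
    |R_n(f) - R(f)| \le \beta + \sqrt{\tfrac{2\log(2/\delta)}{n}}\,(n\beta + B),
\end{equation*}
which is exactly the claim once we set $B = (K+M)^2$ (valid since $|f|\le K$ and $|y|\le M$ force $(f_i - y_i)^2 \le (K+M)^2$); the $2/\delta$ is the two-sided union bound and the $1/\sqrt n$ scaling comes from McDiarmid applied to the bounded-difference map $S \mapsto R_n(f_S) - R(f_S)$. The main obstacle is steps (ii)–(iii): pinning down the \emph{exact} constants $3$ and $4$ and rigorously justifying the uniform lower bound $\lambda_1 - \eta t$ on the smallest relevant eigenvalue of $\mathcal{L} + \eta D_S$ across all single-example perturbations, handling the constant-function direction and the multiplicity bookkeeping with care. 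Step (iv) is then essentially black-box.
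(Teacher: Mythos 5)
Your stability-based route is essentially the same as the paper's: the paper does not prove this theorem itself but imports it verbatim from \cite{Belkin2004SemiSupervisedLO}, whose original derivation is exactly the Bousquet--Elisseeff uniform-stability argument you outline --- closed-form resolvent for the regularized solution, a leave-one-out perturbation bound with the operators controlled by $\lambda_1 - \eta t$ (the subtraction arising, as you correctly flag, from the constant-function direction and multiplicity $\le t$), and the generic stability-to-generalization theorem with loss bound $B = (K+M)^2$. One caution: your computation correctly carries the factor $M$ via $\|b'\|_2 \le M\sqrt{tn}$, yielding a first term $3M\eta^2\sqrt{tn}/(\lambda_1-\eta t)^2$ exactly as in Belkin et al.'s original statement (visible in the paper's commented-out version as $3M\sqrt{tk}/(k\gamma\lambda_1 - t)^2$), so the absent $M$ in the theorem as restated here is evidently a transcription slip --- you should keep the $M$ rather than absorb it when collecting constants.
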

We can set $u = 1, M = 1, K = 1$ to achieve the simplified version (Theorem \ref{thm: belkin bound}).
\section{Dongle nodes} \label{appendix: dongle nodes}
We can change a label propagation problem with multiple initial predictions into a standard label propagation by augmenting a graph with dongle nodes \citep{GFHF}. Without loss of generality, we assume that each initial prediction has 3 possible outputs $h_j: \mathcal{X} \to \{\emptyset, 0,1\}$ for $j = 1,\dots, k$. However, this method also works for a general case when $h_j: \mathcal{X} \to [0,1]$. We augment the original graph $G$ with the following nodes and edges,

\begin{enumerate}
    \item For each weak labeler $h_j$, we add 2 nodes to the graph $G$ with vertices $v_{n+m+j}, v_{n+m+k+j}$. This represents a prediction of class 0 or 1 from the weak labeler $j$.
    \item For each $x_i$ that $h_j(x_i) = 0$, we draw a weighted edge between $v_i$ (the corresponding vertex of $x_i$) and $v_{n+m+j}$ with weight $\alpha_j(x_i)$.
    \item For each $x_i$ that $h_j(x_i) = 1$, we draw a weighted edge between $v_i$ (the corresponding vertex of $x_i$) and $v_{n+m+k+j}$ with weight $\alpha_j(x_i)$.
    \item For each $x_i$ that $h_j(x_i) = \emptyset$, we do not draw any edge.
\end{enumerate}
Let $G'$ be the new graph with a weighted adjacency matrix $(w'_{ij})$ then solving the objective of Equation \ref{eq: dongle objective old} is equivalent to solving
\begin{equation}
\label{eq: dongle node objective new}
    \min_{f\in \mathbb{R}^{n+m+2k}}\sum_{i=1}^{n+m+2k}\sum_{j=1}^{n+m+2k}w'_{ij}(f_i - f_j)^2 
\end{equation}
such that
\begin{enumerate}
    \item $f_i = y_i$ for $i \leq n.$
    \item $ f_i = 0 $ for  $n+m+1 \leq i \leq n+m+k.$
    \item $f_i = 1$ for $n+m+k+1 \leq i \leq n+m+ 2k.$
\end{enumerate}
 We see initial predictions as dongle nodes and encode the parameter $\alpha_j(x_i)$ as a weight of an edge connecting the corresponding dongle node of predictor $j$ to the node of $x_i$. With a direct calculation, we can see that the objective of Equation \ref{eq: dongle node objective new} is the same as the original objective,

 \begin{equation*}
     \sum_{i=1}^{n+m}\sum_{j=1}^{n+m}w_{ij}(f_i - f_j)^2  + \sum_{i=1}^{n+m} \sum_{j=1}^k (f_i - h_j(x_i))^2 \alpha_j(x_i)  
\end{equation*}
such that $f_i = y_i$ for $i \leq n$. With this procedure, we add $2k$ nodes and at most $(n+m)k$ edges to $G$.

\section{Connection between LPA with multiple and single initial predictions}\label{appendix: connection to LPA with single pred}
We analyze the closed form solution of the optimization objective of Equation \ref{eq: dongle objective old}. By differentiating with respect to $f_i$, we know that the optimal solution $f_i^*$ satisfies the following
\begin{align*}
    \sum_{j=1}^{n+m}2w_{ij}(f_i^* - f_j^*)  + \sum_{j=1}^k 2(f_i^* - h_j(x_i)) \alpha_j(x_i)  = 0\\
    f_i^* = \frac{\sum_{j=1}^{n+m}w_{ij}f_j^* + \sum_{j=1}^k \alpha_j(x_i)h_j(x_i)}{\sum_{j=1}^{n+m}w_{ij} + \sum_{j=1}^k \alpha_j(x_i)}
\end{align*}
From Lemma \ref{optimal f}, recall that the optimal solution of LPA with an initial prediction (objective of Equation \ref{lp objective}) is given by
\begin{equation*}
        f^*_i = \frac{\sum_{j}w_{ij}f^*_j + \mu h(x_i)}{\sum_{j}w_{ij} + \mu}
\end{equation*}
We can see that if we set 
\begin{align*}
    h(x_i) = \frac{ \sum_{j=1}^k \alpha_j(x_i)h_j(x_i)}{ \sum_{j=1}^k \alpha_j(x_i)}, \mu(x_i) = \sum_{j=1}^k \alpha_j(x_i),
\end{align*}
the solution LPA with multiple initial predictions is equivalent to LPA with the initial prediction $h$, which could be seen as a weighted average prediction. The objective is given by
\begin{align*}
    \min_{f\in \mathbb{R}^{n+m}} \frac{1}{2}(\sum_{i,j}w_{ij}(f_i - f_j)^2 + \sum_{i=1}^{n+m}\mu(x_i)(f_i - h(x_i))^2 ) \text{ s.t. } f_i = y_i \text{ for } i \leq n.
\end{align*}
We note that now the parameter $\mu$ is now depends on each instance $x_i$. When we have
\begin{equation*}
    \sum_{j=1}^k \alpha_j(x_i) = \mu
\end{equation*}
is a constant for all $x_i$ then we will have the same setting as in the objective of Equation \ref{lp objective}. However, our analysis still works in this case when $\mu(x_i)$ is not a constant.

\subsection{Difference between LPA+WL and LPAD(A)}\label{appendix: difference LPA+WL, LPAD(A)}

We note that LPAD (A) uses Snorkel to estimated accuracies $\alpha_j$. From above, LPAD (A) is equivalent to LPA with an initial prediction
\begin{align*}
    h(x_i) = \frac{ \sum_{j=1}^k \alpha_jh_j(x_i)}{ \sum_{j=1}^k \alpha_j}, \quad \mu(x_i) = \sum_{j=1}^k \alpha_j.
\end{align*}
We observe that $h$ is exactly the prior information for LPA+WL. However, the key difference is that for LPA + WL, we have a fixed $\mu$ for all data points, while in LPAD(A), the value $\mu(x_i)$ depends on $x_i$. To illustrate this, we consider 2 scenarios. First, we assume that we have 3 weak labelers $h_1, h_2,$ and $h_3$, all with estimated accuracy $0.8$. We consider a point $x_1$ with $h_1(x_1) = 1, h_2(x_1) = 1, h_3(x_1) = 1$ and $x_2$ with $h_1(x_2) = 1, h_2(x_2) = \emptyset, h_3(x_2) = \emptyset$, where $\emptyset$ is abstention. We can observe that
\begin{enumerate}
    \item $h(x_1) = h(x_2) = 1$
    \item $\mu(x_1) = 2.4, \enspace \mu(x_2) = 0.8$
\end{enumerate}
Here in LPA + WL, $x_1, x_2$ have the same prior information and regularization parameter $\mu$. In LPAD(A), we put much more weight on the regularization parameter $\mu(x_1)$ than $\mu(x_2)$. This is intuitive as we should be more confident about our prior information when a higher number of weak labelers agree.

\section{Methods for selecting alpha}

\subsection{Boosting approach}\label{appendix: boosting}

From boosting literature \citep{FREUND1997119}, given many weak learners $h_j: \mathcal{X} \to \{0,1\}$ for $j = 1,2,\dots,k$, an optimal way to combine these weak learner (corresponding to an exponential loss upper bound) is a weighted average 
\begin{equation*}
    h = \frac{\sum_{j=1}^{k} \alpha_j h_j}{\sum_{j=1}^k \alpha_j},  \quad\alpha_j = \operatorname{ln}(\frac{\mathbb{P}(h_j(x) = y)}{1-\mathbb{P}(h_j(x) = y)}), 
\end{equation*}
Instead of accuracy, we could set $\alpha_j$ in this fashion suggested by the boosting literature. We show that this value of $\alpha_j$ minimizes the upper bound on the error $|f_i^* - y_i|$. Recall that the optimal solution of Equation \ref{eq: dongle objective old} satisfies
\begin{equation*}
        f_i^*  = \frac{\sum_{j=1}^{n+m}w_{ij}f_j^* + \sum_{j=1}^k \alpha_j(x_i)h_j(x_i)}{\sum_{j=1}^{n+m}w_{ij} + \sum_{j=1}^k \alpha_j(x_i)} 
\end{equation*}
We can bound the error of a point $i$, $|f_i^* - y_i|$ by the error of its neighbor $|f_j^* - y_j|$.
Observe that 
\begin{align*}
    f_i^* -y_i &= \frac{\sum_{j=1}^{n+m}w_{ij}f_j^* + \sum_{j=1}^k \alpha_j(x_i)h_j(x_i)}{\sum_{j=1}^{n+m}w_{ij} + \sum_{j=1}^k \alpha_j(x_i)} - y_i\\
    f_i^* -y_i &= \frac{\sum_{j=1}^{n+m}w_{ij}(f_j^*-y_j + y_j - y_i) + \sum_{j=1}^k \alpha_j(x_i)(h_j(x_i)-y_i)}{\sum_{j=1}^{n+m}w_{ij} + \sum_{j=1}^k \alpha_j(x_i)}\\
        |f_i^* -y_i| &\leq \frac{\sum_{j=1}^{n+m}w_{ij}|f_j^*-y_j| + \sum_{j=1}^{n+m}w_{ij}|y_j - y_i| + |\sum_{j=1}^k \alpha_j(x_i)(h_j(x_i)-y_i)|}{\sum_{j=1}^{n+m}w_{ij} + \sum_{j=1}^k \alpha_j(x_i)}..
\end{align*}
The first term represents errors of neighbor points $|f_j^* - y_j|$ and the second term represents the smoothness of the true labels on the graph $G$ and the third term represents the accuracy of the weighted prediction. We can improve the upper bound by selecting appropriate value of $\alpha_j(x_i)$ and $w_{ij}$ to minimize
\begin{equation*}
   | \frac{\sum_{j=1}^k \alpha_j(x_i)(h_j(x_i)-y_i)}{\sum_{j=1}^k \alpha_j(x_i)}| \geq | \frac{\sum_{j=1}^k \alpha_j(x_i)(h_j(x_i)-y_i)}{\sum_{j=1}^{n+m}w_{ij} + \sum_{j=1}^k \alpha_j(x_i)}|.
\end{equation*}
Consider the following lemma,
\begin{lemma}
\label{lemma: adaboost optimal weight}
Given $k$ classifier $h_i: \mathcal{X} \to \{-1,1\} $ for $i = 1,\dots, k$. Let $h(x) = \sum_{i=1}^k \alpha_i h_i(x)$ be the weighted average among the classifiers. Assume that the prediction of $h_i(x)$ are independent between different $i$ . The optimal $\alpha_i$ that minimize the risk when the loss is exponential loss of $h(x)$,
\begin{equation*}
    \mathcal{L}(h,x,y) = \operatorname{exp}(-yh(x))
\end{equation*}
is given by
\begin{equation*}
    \alpha_i = \frac{1}{2}\operatorname{ln}(\frac{\mathbb{P}(h_i(x) = y)}{1-\mathbb{P}(h_i(x) = y)})
\end{equation*}
\end{lemma}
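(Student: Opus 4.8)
The plan is to reduce the minimization of the expected exponential loss to a product of independent one-dimensional minimizations, one per coordinate $\alpha_i$, and then solve each by elementary calculus. First I would write the risk explicitly as
$$\mathbb{E}[\mathcal{L}(h,x,y)] = \mathbb{E}\left[\exp\left(-y\sum_{i=1}^k \alpha_i h_i(x)\right)\right] = \mathbb{E}\left[\prod_{i=1}^k \exp(-\alpha_i y h_i(x))\right],$$
using only that the sum inside the exponent turns the exponential of a sum into a product of exponentials. Invoking the assumed independence of the classifier outputs $h_i(x)$ across $i$, the expectation of the product factorizes, giving $\mathbb{E}[\mathcal{L}] = \prod_{i=1}^k \mathbb{E}[\exp(-\alpha_i y h_i(x))]$.

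Next I would evaluate each factor by splitting on whether $h_i$ is correct. Since $h_i(x), y \in \{-1,1\}$, the product $y h_i(x)$ equals $+1$ exactly when $h_i(x) = y$, an event of probability $p_i := \mathbb{P}(h_i(x) = y)$, and equals $-1$ otherwise. Hence each factor equals $g_i(\alpha_i) := p_i e^{-\alpha_i} + (1-p_i) e^{\alpha_i}$, so the total risk is $\prod_{i=1}^k g_i(\alpha_i)$. Because this is a product of strictly positive factors each depending on a single coordinate, minimizing the risk jointly is equivalent to minimizing each $g_i$ separately.

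Finally I would minimize $g_i$ by the first-order condition $g_i'(\alpha_i) = -p_i e^{-\alpha_i} + (1-p_i) e^{\alpha_i} = 0$, which rearranges to $e^{2\alpha_i} = p_i/(1-p_i)$ and yields $\alpha_i = \tfrac{1}{2}\ln\!\big(p_i/(1-p_i)\big)$, the claimed expression. Since $g_i''(\alpha_i) = p_i e^{-\alpha_i} + (1-p_i) e^{\alpha_i} > 0$, the function $g_i$ is strictly convex, so this stationary point is the unique global minimizer, completing the argument.

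The main obstacle is the factorization step, namely justifying $\mathbb{E}\big[\prod_i \exp(-\alpha_i y h_i(x))\big] = \prod_i \mathbb{E}[\exp(-\alpha_i y h_i(x))]$, which is delicate because all factors share the common label $y$ and so are not literally independent as written. The clean way to handle this is to condition on $y$, apply the conditional independence of the $h_i(x)$ given $y$, and use that each accuracy $p_i$ is the same under both labels; the conditional product is then label-independent, so the outer expectation over $y$ is trivial and the factorization holds. Everything after that reduction is routine single-variable calculus.
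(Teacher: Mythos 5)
Your proof is correct and follows essentially the same route as the paper's: factorize the expected exponential loss across classifiers via the independence assumption, reduce to minimizing each one-dimensional factor $p_i e^{-\alpha_i} + (1-p_i)e^{\alpha_i}$ separately, and solve the first-order condition to get $\alpha_i = \tfrac{1}{2}\ln\bigl(p_i/(1-p_i)\bigr)$. If anything, your version is slightly more careful than the paper's, which asserts the factorization directly without addressing the shared label $y$ and omits the convexity check confirming that the stationary point is the unique global minimizer.
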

\begin{proof}
The risk is given by
\begin{align*}
    \mathbb{E}(\mathcal{L}(h,x,y)) &= \mathbb{E}(\operatorname{exp}(-yh(x)))\\
    &= \mathbb{E}(\operatorname{exp}(-y\sum_{i=1}^k \alpha_i h_i(x)))\\
    &= \prod_{i=1}^k \mathbb{E}(\operatorname{exp}(-y\alpha_i h_i(x)))\\
    &= \prod_{i=1}^k p_i\operatorname{exp}(-\alpha_i) + (1-p_i)\operatorname{exp}(\alpha_i)\\
\end{align*}
when $p_i = \mathbb{P}(h_i(x) = y)$. It is sufficient to choose $\alpha_i$ that maximize
\begin{equation*}
    p_i\operatorname{exp}(-\alpha_i) + (1-p_i)\operatorname{exp}(\alpha_i).
\end{equation*}
Differentiate with respect to $\alpha_i$ and set to zero, we have
\begin{align*}
    - p_i\operatorname{exp}(-\alpha_i) + (1-p_i)\operatorname{exp}(\alpha_i) = 0\\
    (1-p_i)\operatorname{exp}(\alpha_i) = p_i\operatorname{exp}(-\alpha_i)\\
    exp(2\alpha_i) = \frac{p_i}{1-p_i}\\
    \alpha_i = \frac{1}{2}\operatorname{ln}(\frac{p_i}{1-p_i})
\end{align*}
\end{proof}
Note that exponential loss is an upper bound of our hinge loss and Lemma \ref{lemma: adaboost optimal weight} suggests that to minimize the exponential loss upper bound, we should set 
\begin{equation*}
    \alpha_i = \frac{1}{2}\operatorname{ln}(\frac{p_i}{1-p_i})
\end{equation*}

\subsection{Heteroscedastic regression}\label{appendix: heteroscedacity regression}
Recall that we model
\begin{equation*}
    h_j \sim \mathcal{N}(y, \sigma(x)^2)
\end{equation*}
so that we can write
\begin{align*}
    h_j(x_i) = y(x_i) + \sigma_j(x_i)\varepsilon
\end{align*}
when $\varepsilon \sim \mathcal{N}(0,1)$. We want to regress $\sigma(x)$. Rearraging we have,
\begin{align*}
    h_j(x_i)-  y(x_i) &= \sigma_j(x_i)\varepsilon \\
    (h_j(x_i)-  y(x_i))^2 &= \sigma_j(x_i)^2\varepsilon^2 \\
    \log((h_j(x_i)-  y(x_i))^2) &= \log(\sigma_j(x_i)^2) + \log(\varepsilon^2) 
\end{align*}
On labeled data, we can regress a function $g_j(x_i)$ to match  $\log((h_j(x_i)-  y(x_i))^2)$ then we set
\begin{equation*}
    \alpha_j(x_i) = \frac{1}{\exp(g_j(x_i))}.
\end{equation*}

\section{Additional Experimental Details} \label{exp_detail}

We use the default splits from the WRENCH benchmark \citep{wrench} for each of our binary classification dataset. This benchmark has a Apache-2.0 license. For each text classification dataset (Youtube, SMS, CDR), we use pretrained BERT embeddings \citep{kenton2019bert}. For our image classification tasks (Basketball), we use pretrained ResNet embeddings \citep{resnet}. For each task, we balance the datasets and randomly sample 100 labeled datapoints. We balance the datasets to make sure that the overall sample of labeled data contains roughly the same amount of points from each class.

We use cluster compute resources to produce our empirical results. We use a single GPU (NVIDIA GeForce RTX 2080Ti) to run our methods and each of the baselines.

\begin{figure}
     \centering
     \includegraphics[width=0.6\textwidth]{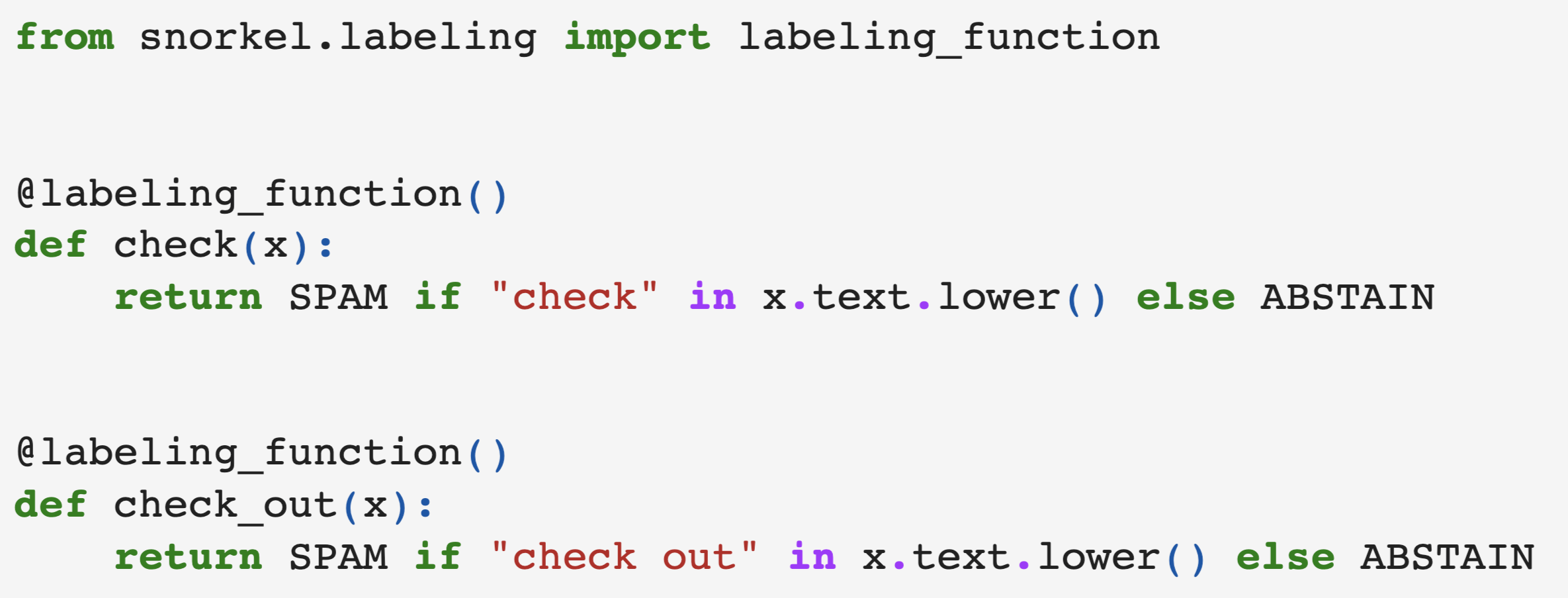}
     \caption{Examples of weak labels on the YouTube dataset}
     \label{fig:weaksup}
\end{figure}

\subsection{Weak Label Sources} \label{appendix: weak labels}

We use the standard weak labels contained within the WRENCH benchmark \citep{wrench}. These are standard in programmatic weak supervision literature and primarily consist of simple hand-engineered rules. For example, on the YouTube dataset (or a spam classification task), examples of weak labels are functions that check for the presence of words in a sentence (Figure \ref{fig:weaksup}). We defer interested readers to the benchmark \citep{wrench} and other papers in weak supervision \citep{snorkel} for more details.
\subsection{Hyperparameter Optimization} \label{appendix: hyperparameter opt}

We perform hyperparameter optimization of all methods, selecting the best set of parameters on the validation set. We optimize over the following parameters for all methods' endmodels:
\begin{itemize}
    \item learning rate: [0.01, 0.001, 0.0001] 
    \item number of epochs: [20, 30, 40, 50]
    \item weight decay: [0, 0.01, 0.001]
\end{itemize}
In each experiment, we have a fixed batch size of 100 and a fixed architecture of a 2 layer neural network with a hidden dimension of 64 and a ReLU activation function. For all graph-based methods, we have an additional parameter $t \in [1, 2, 5, 10, 100]$. $t$ controls the average degree of nodes in $G$. Let $N$ be the number of nodes in $G$, we use the value $\frac{t}{N}$ and as our threshold percentile for our euclidean distance threshold graph. In essence, we add an edge between two points when the Euclidean distance between them is less than the $\frac{t}{N}$-th percentile of all $N^2$ pairwise distances. The motivation for this is that $N$ node corresponds to $N^2$ edges, so adding $\frac{t}{N}$ edges leads to a resulting graph with average degree $t$. 

For our GCN baseline, we construct a graph $G$ in the same manner as all other LPA-based methods. The GCN architecture is a 2 layer neural network with hidden dimension of $16$ and ReLU activations. Consequently, we train an endmodel on the pseudolabeled data, which is the same architecture as all other methods. For our Liger + L baseline, we optimizer over a fixed threshold value for their cosine similarity as some $k$ for each weak labeler. We note that this baseline is highly sensitive to the value of $k$; for BERT embeddings, we select values of $k \in [0.995, 0.9975, 1]$ as points are much less distinguished in the embedding space in comparison the larger foundation models (GPT-3, CLIP) in the original paper \citep{Chen2022ShoringUT}. We use 2 clusters for all tasks.

\renewcommand{\arraystretch}{1.3}
\begin{table}[t]
    \centering
    \resizebox{\columnwidth}{!}{%
    \setlength{\tabcolsep}{4pt}
    \begin{tabular}{l| ccc | ccc | ccc } \toprule
         & \multicolumn{3}{c}{YouTube} & \multicolumn{3}{c}{SMS} & \multicolumn{3}{c}{CDR} \\
         \cmidrule(lr){2-4} \cmidrule(lr){5-7} \cmidrule(lr){8-10} 
         Method & Acc & Cov & NA Acc & Acc & Cov & NA Acc & Acc & Cov & NA Acc \\ \midrule
         Snorkel + L       & 75.96 $\pm$ 0.13 & 89.75 $\pm$ 0.08 & 78.93 $\pm$ 0.14 & 70.40 $\pm$ 0.34 & 48.31 $\pm$ 0.52 & 92.20 $\pm$ 0.29 & 70.64 $\pm$ 0.12 & 92.41 $\pm$ 0.11 & 72.33 $\pm$ 0.15 \\
         LPA               & 55.98 $\pm$ 0.08 & 11.97 $\pm$ 0.16 & 100.00 $\pm$ 0.00 & 54.71 $\pm$ 0.01 & 9.42 $\pm$ 0.03 & 100.00 $\pm$ 0.00 & 50.79 $\pm$ 0.00 & 1.58 $\pm$ 0.00 & 100.00 $\pm$ 0.00 \\ 
         Liger + L & 81.06 $\pm$ 0.47 & 99.98 $\pm$ 0.01 & 81.07 $\pm$ 0.47 & 78.62 $\pm$ 0.23 & 96.01 $\pm$ 0.20 & 79.81 $\pm$ 0.18 & 50.56 $\pm$ 0.13 & 81.79 $\pm$ 10.72 & 50.98 $\pm$ 0.46 \\ \midrule
         \textbf{LPA + WL} & 76.02 $\pm$ 0.12 & 89.81 $\pm$ 0.08 & 78.97 $\pm$ 0.14 & 70.75 $\pm$ 0.35 & 49.03 $\pm$ 0.52 & 92.32 $\pm$ 0.29 & 70.64 $\pm$ 0.12 & 92.41 $\pm$ 0.11 & 72.33 $\pm$ 0.15 \\
         \textbf{LPAD (A)} & 84.03 $\pm$ 0.15 & 89.81 $\pm$ 0.08 & 87.89 $\pm$ 0.16 & 70.80 $\pm$ 0.26 & 49.00 $\pm$ 0.51 & 92.45 $\pm$ 0.10 &  \textbf{72.87 $\pm$ 0.08} & 91.66 $\pm$ 0.49 & 74.95 $\pm$ 0.17 \\ 
         \textbf{LPAD (P)} & \textbf{89.52 $\pm$ 0.13} & 89.75 $\pm$ 0.08 & 94.03 $\pm$ 0.11 & \textbf{70.98 $\pm$ 0.27} & 49.03 $\pm$ 0.52 & 92.79 $\pm$ 0.13 &  71.91 $\pm$ 0.25 & 92.22 $\pm$ 0.11  & 73.75 $\pm$ 0.25 \\  \midrule
         \textbf{LPAD (B)} & 75.87 $\pm$ 0.14 & 89.81 $\pm$ 0.08 & 78.80 $\pm$ 0.16 & 70.83 $\pm$ 0.24 & 48.93 $\pm$ 0.05 & 92.58 $\pm$ 0.16 & 70.40 $\pm$ 0.08 & 91.64 $\pm$ 0.50 & 72.27 $\pm$ 0.13 \\ 
         \textbf{LPAD (O)} & 89.36 $\pm$ 0.08 & 89.81 $\pm$ 0.08 & 93.8 $\pm$ 0.10 & 71.52 $\pm$ 0.30 & 49.00 $\pm$ 0.51 & 93.91 $\pm$ 0.19 & 73.91 $\pm$ 0.07 & 92.23 $\pm$ 0.09 & 75.93 $\pm$ 0.09 \\
         \textbf{LPAD (1)} & 83.11 $\pm$ 0.07 & 76.97 $\pm$ 0.11 & 93.03 $\pm$ 0.07 & 70.88 $\pm$ 0.26 & 48.64 $\pm$ 0.51 & 92.93 $\pm$ 0.09 & 71.39 $\pm$ 0.07 & 75.00 $\pm$ 0.11 & 78.52 $\pm$ 0.09 \\
         \bottomrule
    \end{tabular}
    }
    \resizebox{0.73\columnwidth}{!}{%
    \begin{tabular}{l| ccc | ccc } \toprule
         & \multicolumn{3}{c}{Basketball} & \multicolumn{3}{c}{Tennis} \\
         \cmidrule(lr){2-4} \cmidrule(lr){5-7}
         Method & Acc & Cov & NA Acc & Acc & Cov & NA Acc \\ \midrule
         Snorkel + L       & 69.09 $\pm$ 0.02 & 100.00 $\pm$ 0.0 & 69.09 $\pm$ 0.02 & 86.10 $\pm$ 0.06 & 100.00 $\pm$ 0.0 & 86.10 $\pm$ 0.06 \\
         LPA               & 62.16 $\pm$ 0.02 & 25.82 $\pm$ 0.05 & 97.12 $\pm$ 0.17 & 70.60 $\pm$ 0.29 & 52.61 $\pm$ 0.34 & 89.16 $\pm$ 0.44 \\ 
         Liger + L & 59.06 $\pm$ 2.53 & 55.83 $\pm$ 9.88 & 65.21 $\pm$ 1.17 & 83.60 $\pm$ 1.34 & 100.00 $\pm$ 0.00 & 83.60 $\pm$ 1.34 \\ \midrule
         \textbf{LPA + WL} & 74.09 $\pm$ 0.30 & 99.94 $\pm$ 0.04 & 74.11 $\pm$ 0.30 & 86.91 $\pm$ 0.14 & 99.87 $\pm$ 0.03 & 86.96 $\pm$ 0.15 \\
         \textbf{LPAD (A)} & 69.75 $\pm$ 0.33 & 99.95 $\pm$ 0.03 & 69.76 $\pm$ 0.33 & 87.15 $\pm$ 0.05 & 99.94 $\pm$  0.03 & 87.17 $\pm$ 0.06 \\  
         \textbf{LPAD (P)} & 82.46 $\pm$ 0.19 & 90.42 $\pm$ 0.32 & 85.89 $\pm$ 0.19 & 87.69 $\pm$ 0.11 & 99.98 $\pm$ 0.01 & 87.70 $\pm$ 0.12 \\  \midrule
         \textbf{LPAD (B)} & 74.26 $\pm$ 0.30 & 99.95 $\pm$ 0.02 & 74.28 $\pm$ 0.30 & 87.20 $\pm$ 0.08 & 99.97 $\pm$ 0.01 & 87.21 $\pm$ 0.08 \\ 
         \textbf{LPAD (O)} & 74.02 $\pm$ 0.33 & 99.99 $\pm$ 0.01 & 74.03 $\pm$ 0.33 & 87.23 $\pm$ 0.09 & 100.00 $\pm$ 0.00 & 87.23 $\pm$ 0.09 \\
         \textbf{LPAD (1)} & 75.87 $\pm$ 0.20 & 69.97 $\pm$ 0.23 & 86.97 $\pm$ 0.24 & 87.46 $\pm$ 0.17 & 99.38 $\pm$ 0.04 & 87.70 $\pm$ 0.17 \\
         \bottomrule
    \end{tabular}
    }
    \vspace{2mm}
    \caption{We report accuracy, coverage, and non-abstaining accuracy (NA Acc) of the baselines and our variants of LPA on the \textbf{training data} (i.e, pseudolabel statistics), when averaged over 5 seeds.}
    \label{tab:lp_accs_full}
\end{table}

\section{Complete Version of Tables}

\begin{table}[h]
    \centering
    \resizebox{0.8\columnwidth}{!}{%
    \begin{tabular}{l | c c c c c} \toprule
         Method & Youtube & SMS & Basketball & CDR & Tennis  \\ \midrule
         Snorkel + L & 87.44 $\pm$ 0.47 & 96.24 $\pm$ 0.32 & 82.08 $\pm$ 0.83 & 68.03 $\pm$ 0.28 &88.51  $\pm$   0.04 \\ 
         FS + L & 87.76 $\pm$ 0.51 & 94.84 $\pm$ 0.43 & 70.23 $\pm$ 1.20 & 67.70 $\pm$ 0.29 & 88.56 $\pm$ 0.02 \\
         CLL & 88.56 $\pm$ 0.80 & 94.56 $\pm$ 0.73 & 77.02 $\pm$ 3.96 & 68.52 $\pm$ 0.58 &88.78 $\pm$ 0.13 \\
         LPA & 82.00 $\pm$ 1.37 & 94.32 $\pm$ 0.45 & 78.71 $\pm$ 2.41 & 67.41 $\pm$ 0.82 &83.35  $\pm$   3.30 \\
         GCN & 84.16 $\pm$ 0.95 & 94.32 $\pm$ 1.02 & 61.34 $\pm$ 1.16 & 65.42 $\pm$ 1.00 &88.63  $\pm$   0.14\\ 
         Liger + L & 88.72 $\pm$ 0.58 & 96.08 $\pm$ 0.38 &80.98 $\pm$ 1.71 & 67.33 $\pm$ 0.18 & 86.43 $\pm$ 0.87 \\ \midrule
         \textbf{LPA + WL} & 88.32 $\pm$ 0.50  &  96.80 $\pm$ 0.36 & 83.13 $\pm$ 1.43 & 67.61 $\pm$ 0.19 & 88.51  $\pm$   0.04 \\
         \textbf{LPAD (A)} & 90.32 $\pm$ 0.43 & 96.32 $\pm$ 0.52 & 83.06 $\pm$ 0.74 & 68.13 $\pm$ 0.74 & 88.56  $\pm$   0.02\\
         \textbf{LPAD (P)} & 87.84 $\pm$ 0.53 & 96.64 $\pm$ 0.39 & 82.01 $\pm$ 2.96 & 68.97 $\pm$ 0.51 & 88.60   $\pm$  0.04 \\ \midrule
         \textbf{LPAD (B)} & 88.64 $\pm$ 0.37 & 96.56 $\pm$ 0.33 & 76.58 $\pm$ 2.20 & 67.01 $\pm$ 0.43 & 88.56  $\pm$   0.02\\ 
         \textbf{LPAD  (O)} &90.16 $\pm$ 0.50 & 96.40 $\pm$ 0.50 & 81.10 $\pm$ 1.43 & 69.06  $\pm$ 0.59 & 88.58  $\pm$   0.02 \\
         \textbf{LPAD  (1)} &83.20   $\pm$  1.15 & 94.32 $\pm$     0.35 &78.61  $\pm$   1.95 & 67.76 $\pm$  0.21 & 88.43  $\pm$  0.16\\
         Fully Supervised & 89.92 $\pm$ 1.45 & 98.04 $\pm$ 0.38 & 86.04 $\pm$ 2.02 & 73.71 $\pm$ 0.85 & 88.43 $\pm$ 1.06\\
         \bottomrule
    \end{tabular}
    }
    \vspace{2mm}
    \caption{We report accuracy on \textbf{test data} for training an endmodel on pseudolabeled training data, when averaged over 5 seeds. We bold baselines when they outperform both of our methods. We bold our methods when they outperform all baselines.}
    \label{tab:em_table_full}
\end{table}

We present our results for both training/pseudolabel performance (Table \ref{tab:lp_accs_full}) and test/endmodel performance (Table \ref{tab:em_table_full}) in more detail and with additional comparisons. 

In our pseudolabel performance table, we provide the Non-Abstain accuracy (i.e, only considering accuracy on points on which the model makes a vote). We define abstaining as having a maximum logit (across either class) that is within $\epsilon = 0.001$ of 0.5. We observe a fundamental tradeoff: balancing high accuracy and little coverage against lower accuracy and higher coverage. We also observe that LPA performs well locally on regions connected to labeled points with non-abstain accuracy close to 100 percents, but has low overall coverage.

For our endmodel results, we additionally compare against a fully supervised approach that uses all of the training data and their labels. We remark that some datasets in the WRENCH benchmark have noisy labels, leading to imperfect fully supervised performance. For example, we also add evaluations on the Tennis dataset, which only achieves ~88\% fully supervised performance, and all methods seem to match this performance. We also add some additional variations of our dongle-based approach. We add a comparison to a boosting  (Appendix \ref{appendix: boosting}) to determine $\alpha$, which we refer to as LPAD (B). We also compare against a method that uses the unosberved ground truth accuracies for $\alpha$ (LPAD (O)) and another method that sets $\alpha = 1$, $\forall x$ (LPAD (1)). We note that LPAD (O) is an unfair comparison to all other methods as it accesses ground truth accuracies that other methods do not use; we add this comparison to describe the best potential performance of LPAD.

\section{Hyperparameter ablation}
We provide an ablation study on hyperparameter $t$. We report accuracy on the test data of an endmodel that is trained on pseudolabels from baselines and our methods with particular values of $t$ to determine the construction of $G$. We observe that all graph-based methods are sensitive to the choice of $t$, which controls the sparsity of edges in the (Euclidean) graph $G$. We remark that this finding is intuitive as most graph-based semi-supervised algorithms leverage properties of this graph to achieve better performance. We can see a common trend among all methods where when $t$ is large, the end-model accuracy tends to decrease. We note that Snorkel + L does not leverage any graph information, but we still add it here for comparison.

\begin{table}[h]
    \centering
    \setlength{\tabcolsep}{4pt}
    \resizebox{0.9\columnwidth}{!}{%
    \begin{tabular}{l| ccccc} \toprule
         & \multicolumn{5}{c}{YouTube}  \\
         \cmidrule(lr){2-6} 
         Method & $t=1$ & $t=2$ &$t=5$ & $t=10$ & $t=100$\\ \midrule
         Snorkel + L &87.04 $\pm$ 0.47 & 85.44 $\pm$ 0.9 & 86.4 $\pm$ 0.78 & 86.4 $\pm$ 0.78 & \textbf{87.04 $\pm$ 0.47}  \\
         LPA    & 79.76 $\pm$ 1.99 & 81.6 $\pm$ 1.15 & 82.0 $\pm$ 1.37 & 82.64 $\pm$ 1.62 & 75.68 $\pm$ 1.51  \\ \midrule
         \textbf{LPA + WL} & 86.24 $\pm$ 0.75 & 86.56 $\pm$ 0.81 & 86.16 $\pm$ 0.71 & 88.32 $\pm$ 0.5 & 83.12 $\pm$ 1.2\\
         \textbf{LPAD (A)} & 89.12 $\pm$ 0.5 & \textbf{88.96 $\pm$ 1.04} & 89.04 $\pm$ 0.68 & \textbf{90.32 $\pm$ 0.43} & 84.0 $\pm$ 0.54 \\ 
         \textbf{LPAD (B)} & 87.2 $\pm$ 0.55 & 86.24 $\pm$ 1.22 & 87.12 $\pm$ 0.69 & 88.64 $\pm$ 0.37 & 83.84 $\pm$ 0.27\\ 
         \textbf{LPAD (P)} &\textbf{87.76 $\pm$ 0.79} & 87.84 $\pm$ 0.53 & \textbf{89.2 $\pm$ 0.31} & 89.92 $\pm$ 0.53 & 82.8 $\pm$ 1.03 \\ \bottomrule
    \end{tabular}
    }
    \vspace{2mm}
    \caption{We report accuracy on \textbf{test data} for training an endmodel on pseudolabeled training data from various label propagation methods when using different hyperparameter $t$ and averaged over 5 seeds.}
    \label{tab: ablation t, youtube}
\end{table}

\begin{table}[h]
    \centering
    \setlength{\tabcolsep}{4pt}
    \resizebox{0.9\columnwidth}{!}{%
    \begin{tabular}{l| ccccc} \toprule
         & \multicolumn{5}{c}{SMS}  \\
         \cmidrule(lr){2-6} 
         Method & $t=1$ & $t=2$ &$t=5$ & $t=10$ & $t=100$\\ \midrule
         Snorkel + L &95.04 $\pm$ 0.46 & 96.08 $\pm$ 0.48 & 96.24 $\pm$ 0.32 & \textbf{96.24 $\pm$ 0.32} & \textbf{95.04 $\pm$ 0.46}  \\
         LPA    &94.32 $\pm$ 0.45 & 94.52 $\pm$ 0.26 & 95.24 $\pm$ 0.38 & 92.2 $\pm$ 1.08 & 80.76 $\pm$ 3.51  \\ \midrule
         \textbf{LPA + WL} &94.88 $\pm$ 0.67 & 96.44 $\pm$ 0.26 & \textbf{96.8 $\pm$ 0.36} & 95.36 $\pm$ 0.6 & 85.12 $\pm$ 3.82 \\
         \textbf{LPAD (A)} &95.6 $\pm$ 0.28 & \textbf{96.8 $\pm$ 0.33} & 96.32 $\pm$ 0.52 & 95.96 $\pm$ 0.41 & 86.12 $\pm$ 4.27 \\ 
         \textbf{LPAD (B)} &\textbf{96.04 $\pm$ 0.19} & 96.68 $\pm$ 0.32 & 96.56 $\pm$ 0.33 & 96.12 $\pm$ 0.34 & 82.64 $\pm$ 4.36 \\ 
         \textbf{LPAD (P)} &95.8 $\pm$ 0.46 & 95.64 $\pm$ 0.36 & 96.64 $\pm$ 0.39 & 96.16 $\pm$ 0.48 & 84.32 $\pm$ 2.82 \\ \bottomrule
    \end{tabular}
    }
    \vspace{2mm}
    \caption{We report accuracy on \textbf{test data} for training an endmodel on pseudolabeled training data from various label propagation methods when using different hyperparameter $t$ and averaged over 5 seeds.}
    \label{tab: ablation t, SMS}
\end{table}

\begin{table}[h]
    \centering
    \setlength{\tabcolsep}{4pt}
    \resizebox{0.9\columnwidth}{!}{%
    \begin{tabular}{l| ccccc} \toprule
         & \multicolumn{5}{c}{CDR}  \\
         \cmidrule(lr){2-6} 
         Method & $t=1$ & $t=2$ &$t=5$ & $t=10$ & $t=100$\\ \midrule
         Snorkel + L &67.87 $\pm$ 0.25 & \textbf{68.03 $\pm$ 0.28} & 68.24 $\pm$ 0.72 & \textbf{68.24 $\pm$ 0.72} & \textbf{67.87 $\pm$ 0.25}  \\
         LPA    &67.01 $\pm$ 0.82 & 65.17 $\pm$ 1.09 & 63.19 $\pm$ 1.18 & 59.33 $\pm$ 1.8 & 44.39 $\pm$ 2.69  \\ \midrule
         \textbf{LPA + WL} &67.87 $\pm$ 0.25 & 67.61 $\pm$ 0.19 & 67.16 $\pm$ 0.84 & 65.8 $\pm$ 0.82 & 49.66 $\pm$ 1.8 \\

        \textbf{LPAD (A)} &68.13 $\pm$ 0.74 & 67.68 $\pm$ 1.1 & \textbf{68.65 $\pm$ 0.53} & 67.56 $\pm$ 0.38 & 61.28 $\pm$ 3.21 \\
         \textbf{LPAD (B)} &66.44 $\pm$ 1.23 & 67.01 $\pm$ 0.43 & 65.98 $\pm$ 0.92 & 66.26 $\pm$ 1.15 & 54.06 $\pm$ 2.78 \\ 
        \textbf{LPAD (P)} &\textbf{68.97 $\pm$ 0.51} & 67.27 $\pm$ 1.05 & 65.48 $\pm$ 0.36 & 64.93 $\pm$ 1.8 & 58.34 $\pm$ 3.71 \\ 
  \bottomrule
    \end{tabular}
    }
    \vspace{2mm}
    \caption{We report accuracy on \textbf{test data} for training an endmodel on pseudolabeled training data from various label propagation methods when using different hyperparameter $t$ and averaged over 5 seeds.}
    \label{tab: ablation t, CDR}
\end{table}

\begin{table}[h]
    \centering
    \setlength{\tabcolsep}{4pt}
    \resizebox{0.9\columnwidth}{!}{%
    \begin{tabular}{l| ccccc} \toprule
         & \multicolumn{5}{c}{Basketball}  \\
         \cmidrule(lr){2-6} 
         Method & $t=1$ & $t=2$ &$t=5$ & $t=10$ & $t=100$\\ \midrule
         Snorkel + L &81.62 $\pm$ 1.07 & \textbf{83.62 $\pm$ 0.8} & \textbf{82.08 $\pm$ 0.83} & 82.08 $\pm$ 0.83 & \textbf{81.62 $\pm$ 1.07}  \\
         LPA    &75.56 $\pm$ 0.55 & 79.36 $\pm$ 1.49 & 75.12 $\pm$ 1.26 & 78.71 $\pm$ 2.41 & 66.02 $\pm$ 1.19  \\ \midrule
         \textbf{LPA + WL} &\textbf{83.13 $\pm$ 1.43} & 80.87 $\pm$ 0.64 & 79.95 $\pm$ 1.56 & 80.11 $\pm$ 1.75 & 73.45 $\pm$ 1.09 \\
         \textbf{LPAD (A)} &80.44 $\pm$ 0.83 & 78.9 $\pm$ 1.53 & 81.64 $\pm$ 1.35 & \textbf{83.06 $\pm$ 0.75} & 74.83 $\pm$ 1.49 \\ 
         \textbf{LPAD (B)} &75.81 $\pm$ 3.47 & 72.75 $\pm$ 2.11 & 76.58 $\pm$ 2.2 & 78.0 $\pm$ 4.34 & 69.36 $\pm$ 1.03 \\ 
         \textbf{LPAD (P)} &82.01 $\pm$ 2.96 & 74.6 $\pm$ 2.05 & 73.31 $\pm$ 5.25 & 68.71 $\pm$ 4.41 & 67.18 $\pm$ 3.64 \\ \bottomrule
    \end{tabular}
    }
    \vspace{2mm}
    \caption{We report accuracy on \textbf{test data} for training an endmodel on pseudolabeled training data from various label propagation methods when using different hyperparameter $t$ and averaged over 5 seeds.}
    \label{tab: ablation t, Basketball}
\end{table}

\clearpage

\end{document}